\newcolumntype{M}[1]{>{\centering\arraybackslash}p{#1}}
\newenvironment{emphenv}
{\itshape} 
{}        
\icmltitlerunning{Exploring the LLM Journey from Cognition to Expression with Linear Representations}
\begin{document}

\twocolumn[

\icmltitle{Exploring the LLM Journey from Cognition to Expression with Linear Representations}



\icmlsetsymbol{equal}{*}

\begin{icmlauthorlist}
\icmlauthor{Yuzi Yan}{to,goo}
\icmlauthor{Jialian Li}{to}
\icmlauthor{Yipin Zhang}{to}
\icmlauthor{Dong Yan}{to}
\end{icmlauthorlist}

\icmlaffiliation{to}{Baichuan AI}
\icmlaffiliation{goo}{Tsinghua University}

\icmlcorrespondingauthor{Dong Yan}{sproblvem@gmail.com}

\icmlkeywords{Machine Learning, ICML}

\vskip 0.3in
]



\printAffiliationsAndNotice{}  

\begin{abstract}
This paper presents an in-depth examination of the evolution and interplay of cognitive and expressive capabilities in large language models (LLMs), with a specific focus on Baichuan-7B and Baichuan-33B, an advanced bilingual (Chinese and English) LLM series. We define and explore the model's cognitive and expressive capabilities through linear representations across three critical phases: Pretraining, Supervised Fine-Tuning (SFT), and Reinforcement Learning from Human Feedback (RLHF). Cognitive capability is defined as the quantity and quality of information conveyed by the neuron output vectors within the network, similar to the neural signal processing in human cognition. Expressive capability is defined as the model’s capability to produce word-level output. Our findings unveil a sequential development pattern, where cognitive abilities are largely established during Pretraining, whereas expressive abilities predominantly advance during SFT and RLHF. Statistical analyses confirm a significant correlation between the two capabilities, suggesting that cognitive capacity may limit expressive potential. The paper also explores the theoretical underpinnings of these divergent developmental trajectories and their connection to the LLMs' architectural design. Moreover, we evaluate various optimization-independent strategies, such as few-shot learning and repeated sampling, which bridge the gap between cognitive and expressive capabilities. This research reveals the potential connection between the hidden space and the output space, contributing valuable insights into the interpretability and controllability of their training processes.
\end{abstract}

\section{Introduction}
\label{sec:introduction}
Large Language Models (LLMs) are profoundly transforming the way we work and live. To train these models, computational power worth billions is used daily in the three-phase paradigm of Pretraining, Supervised Fine-Tuning (SFT), and Reinforcement Learning from Human Feedback (RLHF). However, the specific roles of these three stages are only broadly understood: Pretraining primarily encodes knowledge, SFT aligns question-answer formats, and RLHF refines outputs via human feedback~\cite{achiam2023gpt}. Evidently, this understanding is on the level of behavioral patterns and does not aid in comprehending LLMs from a capability perspective, nor does it guide us on how to refine the training process to enhance and control the model's proficiency in various tasks.

To analyze the three-phase training paradigm from a capability perspective, researchers have introduced the concept of Alignment Tax to articulate the discrepancy between the model's inherent capabilities and its outward performance~\cite{lightman2023let, ouyang2022training, askell2021general}. Beside the training paradigm, prompt engineering also significantly influences the performance exhibited by the model. These pieces of evidence point towards a hypothesis that sometimes LLMs internally comprehend and encode the answer to a question in the internal representations but struggles to output it effectively.

Prior research in interpretability has culminated in considerable breakthroughs, aiding in demystifying the internal processes of LLMs. In~\citet{zou2023representation}, the authors propose representation engineering (RepE) as an advanced method to improve AI transparency. Meanwhile, in~\citet{park2023linear}, the study suggests the possibility of a linear space structure within neuron-level representations. Additionally, probing-based explanation techniques offer novel insights into the abstract abilities of LLMs, as explored in~\citet{zhao2023explainability}.

In this paper, we define and quantify the \emph{cognitive capability} and the \emph{expressive capability} of a LLM and explore the establishment process of them. The \emph{cognitive capability} is defined by the quantity and quality of information conveyed by the neuron output vectors within the network, similar to the neural signal processing in human cognition. This definition corresponds to the way the human brain processes information and makes sense of the world. The definition of cognitive capability exploits linear representations within the hidden space, obtained particularly from a selected intermediate layer.  On the other hand, The \emph{expressive capability} is defined as the model’s capability to produce word-level output, similar to the human ability to express thoughts or feelings by language, art, or other means. Our study includes a comprehensive series of experiments and analyses carried out during the Pretraining, SFT, and RLHF phases of the Baichuan-7B and Baichuan-33B. These models are part of an advanced bilingual LLM series Baichuan2~\cite{baichuan2023baichuan2}. Notably, Baichuan-7B is an open-source model, whereas Baichuan-33B is a closed-source model. We present the following key findings: 1) Cognitive and expressive capabilities evolve at different paces. Specifically, cognitive capability is primarily established during the Pretraining stage, whereas expressive capability is developed during the SFT and RLHF stages, with SFT playing a more significant role. 2) A robust statistical correlation exists between cognitive and expressive capabilities. The cognitive capability sets the upper boundary for the expressive capability. 3) Our research illustrates that specific techniques, including few-shot learning, repeated sampling, and prompt engineering, can efficaciously bridge the gap between a LLM's expressive and cognitive capabilities.

In addition, we delve into the internal mechanisms governing the development of cognitive and expressive capabilities, along with a theoretical analysis of the gap between them, in Section~\ref{sec: theoretical analysis}. We conduct multiple experiments to underpin our hypotheses. Specifically, the discrepancy between these capabilities may stem from the differences in linear separability between the embedding space of neuron output and the token-level semantic space. From the standpoint of the LLM's architecture, the diminution of this gap during the SFT/RLHF stage could be attributed to enhancements in the vocabulary linear layer at this phase. We anticipate that these discoveries will offer valuable insights into the training process of LLMs.

\begin{figure}[t]
\begin{center}
\centerline{\includegraphics[width=\columnwidth]{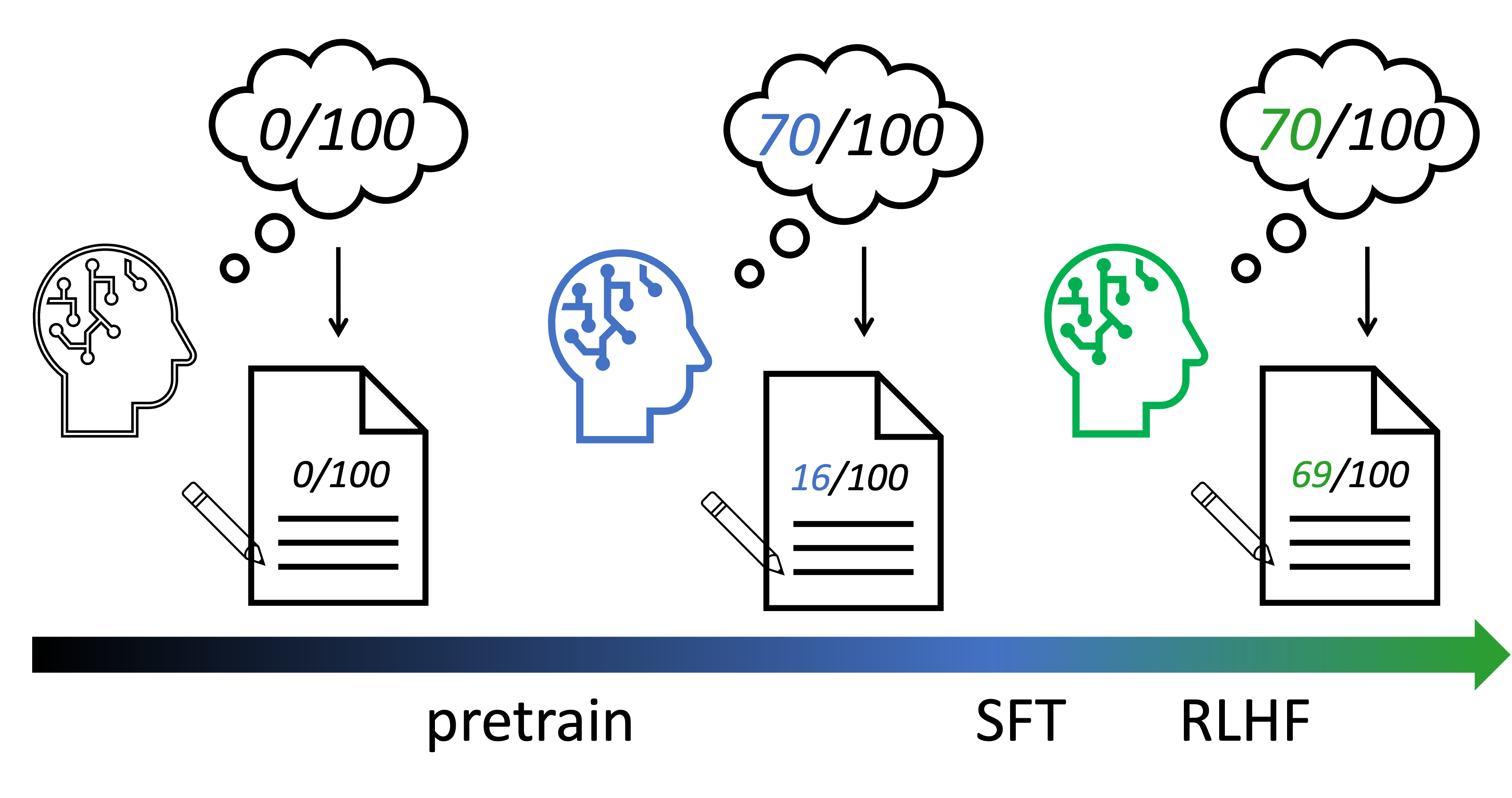}}
\caption{
Schematic representation of the asynchronous capabilities development process in LLMs. Initially, the model lacks the ability to comprehend questions or generate relevant responses. Through the Pretraining phase, the LLM primarily acquires cognitive capabilities, though its ability to articulate responses remains underdeveloped. Subsequent SFT and RLHF enhance the model's expressive capability, aligning it closely with the cognitive skills.
}
\vspace{-25pt} 
\label{fig: schematic_diagram}
\end{center}
\end{figure}

\section{Related Work}
\label{sec:related work}
It is very appealing to explain the model's capability by analyzing the hidden space's linear properties of the language model, and it has attracted a lot of research attention recently. These works significantly inspired this paper. A comparative analysis is then presented to highlight the relationship and the distinctions between our work and these prior studies.

\textbf{Linear subspaces and geometry in language representations.}  
The hypothesis of linear subspaces was initially observed empirically in the context of word embeddings by~\citet{mikolov2013distributed}. Similar structures have been observed in cross-lingual word embeddings~\cite{mikolov2013exploiting}, as well as in sentence embeddings and the representational spaces of Transformer-based LLMs~\cite{hernandez2023linearity}. There is a significant body of work studying the geometry of word-level or sentence-level representations~\cite{arora2016latent, mimno2017strange, li2023emergent, park2023linear}. These observations motivate our approach to measuring cognitive capability in LLMs using linear representations in the hidden space. This paper lends further credibility to the analysis of language model characteristics within a linear hidden space by demonstrating a strong correlation between the measured cognitive capabilities and the model's upper limit of expressiveness.

\textbf{Promoting LLM by linear representation.}
Recent advancements have leveraged linear representations to augment the capabilities of LLMs. In~\citet{liu2023aligning}, the authors introduce an innovative RLHF approach, which refines linear representations to align closely with high-level human preferences, thereby enabling more precise control over model behavior and enhancing its performance. The findings of this study suggest two critical insights: firstly, certain linear representations within the hidden layers may provide information that is at least as significant as that conveyed by token-level outputs. Secondly, these linear representations can serve as potent indicators for directing the model's refinement. In our work, we substantiate the first hypothesis by delineating the gap between cognitive capabilities, as defined by linear representations, and expressive capabilities, as indicated by direct token-level outputs. Furthermore, we lend partial support to the second hypothesis by identifying strategies to bridge this gap without parameter optimization.

\textbf{Measurement and mechanistic interpretability.} 
A considerable volume of research has been dedicated to the exploration of linear representations for both interpreting (probing)~\cite{alain2016understanding,kim2018interpretability} and manipulating (steering)~\cite{turner2023activation} the behavior of models. Notably, the work presented in~\citet{zou2023representation} posits that engaging with concept-level representations within LLMs can substantially enhance the model's proficiency in specific concepts such as truthfulness and honesty. This discovery underscores the potential of linear representations significantly augment or modulate the expressive prowess of models from a top-down perspective. Complementing this viewpoint, our study explore the interplay between linear representations and the structural design of LLMs, offering a bottom-up analysis.

\section{Main Results}
\label{sec: main results}
This section defines \textit{cognitive} and \textit{expressive} capabilities in LLMs and outlines their quantification methods. We then present experimental results from public datasets, highlighting the asynchronous development of these capabilities during training. Finally, we demonstrate their statistical correlation, underscoring their interdependence in LLM performance.

\subsection{Definitions and Quantification of Cognitive and Expressive Capabilities}
\label{subsec: The definitions and the quantification of cognitive and expressive capabilities}
We conceptualize the inference process in a LLM as follows: Given an input prompt $x$ comprising $n$ tokens, where $x \in \cT^n$ and $\cT$ denotes the token-level space, the LLM initially maps $x$ to a high-dimensional vector $c \in \cR^m$ through a mapping function $f(\cdot)$. The architecture of the LLM, such as in prevalent decoder-only models like Llama 2~\cite{touvron2023llama} or GPT~\cite{achiam2023gpt}, determines the specifics of $f(\cdot)$, including the hidden size of the Transformer block~\cite{vaswani2017attention}, network weights, the layer from which $c$ is extracted, and other hyperparameters. Subsequently, the function $g(\cdot)$ maps $f(x)$ to the next token output $y \in \mathcal{T}$, influenced by the model's remaining architecture, notably including a critical vocabulary linear layer discussed further in Section~\ref{sec: theoretical analysis}. The inference process is succinctly represented as:
\$
x\in \cT^n \xrightarrow{f(\cdot)} c \in \cR^m \xrightarrow{g(\cdot)} y \in \cT
\$

Our hypothesis, supported by empirical evidence presented in later sections, posits that the intermediate vector $c$ harbors more insightful information compared to the direct token output $y$. For instance, in binary classification tasks such as "True or False" questions, leveraging unsupervised algorithms like PCA on $c$ has shown to surpass strategies that directly analyze $y$. This suggests that LLMs may grasp the underlying problem and possess the correct solution, yet lack the capability to articulate it accurately.

We analogize $c$, emerging from the neuron outputs within the network and resembling neural signals in human cognition, as the model's cognitive capability. Conversely, the ability to produce token-level outputs is defined as the LLM's expressive capability.

To assess these capabilities, we devise experiments with single-choice questions, comprising one question ($<\text{QUESTION}>$) and multiple choices ($<\text{CHOICE}_i>$), where only one is correct. The evaluation methods are detailed below:

\textbf{Quantification of cognitive capability.}

For each single-choice question within a dataset, we pair each $(<\text{QUESTION}>, <\text{CHOICE}_i>)$ with Template A (see Appendix~\ref{appen:sec:template}) to form an input $x$. The LLM's cognitive capability on this dataset is evaluated using Algorithm~\ref{alg: representation engineering}, drawing on the unsupervised Representation Engineering (RepE) approach, which uses Principal Component Analysis (PCA)~\cite{zou2023representation}.

\textbf{Quantification of expressive capability}: 
In the test set $D_{\text{test}}$, we wrap each $(<\text{QUESTION}>, \{<\text{CHOICE}_i\}_{i=1}^{4})$ with Template B (see Appendix~\ref{appen:sec:template}). For a given model, the wrapped prompt serve as inputs to calculate the accuracy of direct token responses, defining the expressive capability's quantification. It is crucial to note that our method diverges from frameworks like~\citet{eval-harness}, which employ greedy search to assess the likelihood of each option in its entirety, as opposed to our focus on the model's direct token outputs.

\begin{algorithm}[t]
\caption{Cognitive capability quantification}
\begin{algorithmic}[1]
    \REQUIRE Model $M$, training set $D_{\text{train}}$, test set $D_{\text{test}}$.
    \STATE Format the input prompts $\{x^{\text{train}}\}$ from $D_{\text{train}}$ and $\{x^{\text{test}}\}$ from $D_{\text{test}}$ using Template A.
    \STATE \textbf{PCA Direction Extraction:}
    \FOR{each Transformer block $i$}
        \STATE Use $\{x^{\text{train}}\}$ as input and extract embeddings $\{c^{\text{train}}_i\}$ for the last token.
        \STATE Calculate a PCA direction $v_i \in \cR^m$ using $\{ c^{\text{train}}_i \}$.
        \STATE Determine the sign function: $S_i \in \{\argmin, \argmax\}$ based on the principle component extracted and the correct answers.
    \ENDFOR
    \STATE \textbf{Evaluation:}
    \FOR{each Transformer block $i$}
        \STATE Use $\{x^{\text{test}}\}$ as input and extract embeddings $\{c^{\text{test}}_i\}$ for the last token.
        \STATE Project $\{c^{\text{test}}_i\}$ onto $v_i$ and choose one answer by $S_i$: $S_i(v_i^\Tb \cdot c^{\text{test}}_i )$.
        \STATE Compare the chosen answers and the correct answers, calculate the accuracy: $\text{Acc}_i$
    \ENDFOR
    \STATE Compute maximum evaluation accuracy across all Transformer blocks as the quantification of the cognitive capability: $\max_i(\{ \text{Acc}_i \})$.
\end{algorithmic}
\label{alg: representation engineering}
\end{algorithm}

\subsection{Datasets and Experimental Setup}
We carry out our quantification experiments using four standard benchmark datasets: OpenbookQA \cite{mihaylov2018can}, CommonSenseQA \cite{talmor2018commonsenseqa}, RACE \cite{lai2017race} and ARC \cite{clark2018think}. OpenBookQA is designed to test a language model's ability for text understanding and reasoning. It focuses on the application of common sense and general knowledge in answering questions. CommonSenseQA is a benchmark for testing the common sense of AI systems. It includes questions that require an understanding of everyday concepts and relationships between objects and ideas. The RACE dataset is a large-scale reading comprehension dataset collected from English exams for middle and high school Chinese students. It consists of passages and corresponding single-choice questions. The AI2 Reasoning Challenge (ARC) aims to evaluate a system's reasoning ability and understanding of scientific texts, offering two levels of difficulty, referred to as ARC-challenge and ARC-easy. All the datasets above can be formatted as single-choice questions with 4 options.

Our experiments use checkpoints from the Pretraining, SFT, and RLHF stages of the in-house developed Baichuan-7B and Baichuan-33B, both decoder-only, bilingual LLMs. The 7B model is openly available~\cite{baichuan2023baichuan2}, and the 33B model extends the 7B architecture with increased parameters. For RLHF, we implement the Proximal Policy Optimization (PPO) strategy, as elaborated in \citet{achiam2023gpt}.

\subsection{Pretraining: Building Cognitive Capability}
\label{subsec: Pretraining building the cognitive capability}

\begin{figure}[t]
\begin{center}
\centerline{\includegraphics[width=\columnwidth]{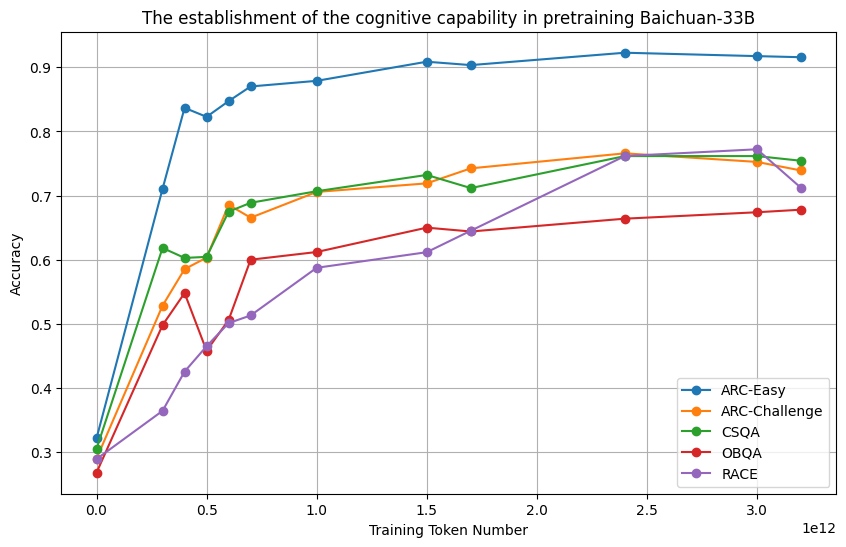}}
\caption{
Progression of cognitive capability during the Pretraining stage in Baichuan-33B, as quantified by linear representations. The graph illustrates a stabilization in cognitive performance when the volume of training data reaches approximately 2.4T. 
}
\label{fig: pretrain pca}
\vspace{-30pt} 
\end{center}
\end{figure}
This section details the development of cognitive capability during the Pretraining phase for Baichuan-33B and Baichuan-7B. Both models were trained from the ground up, with training data incrementally increased up to 3.2T. The progression of cognitive capability in Baichuan-33B, assessed by Algorithm~\ref{alg: representation engineering}, is depicted in Figure~\ref{fig: pretrain pca}. For Baichuan-7B, corresponding findings are presented in Figure~\ref{appen: fig: pretrain pca in 7B} within Appendix~\ref{appen:subsec:Pretraining process of Baichuan-7B}.

In both Baichuan-33B and Baichuan-7B, we note a swift initial improvement in cognitive capability that tapers off with increased training data. Initially, both models exhibit decision-making akin to random guessing, with accuracy around 0.25, indicative of their nascent state. The cognitive capability's growth stabilizes around 2.4T of data for Baichuan-33B and 1.5T for Baichuan-7B, suggesting a nearing to the models' cognitive limits in tasks like reasoning, commonsense understanding, and information retrieval across various datasets. Notably, the model with fewer parameters, Baichuan-7B, demonstrates lower accuracy, reflecting its lower cognitive capacity compared to Baichuan-33B.

Additionally, we applied the general lm-evaluation-harness framework~\cite{eval-harness} to both models, which employs greedy search to evaluate each option's probability for answer selection. The outcomes are depicted in Figure~\ref{fig:lm evaluation harness in Baichuan-33B} for Baichuan-33B and in Figure~\ref{fig:lm evaluation harness in Baichuan-7B} for Baichuan-7B in Appendix~\ref{appen:subsec:The gap between the quantified cognitive capability and the lm-evaluation-harness performance}. Despite the similarity in pattern between the two curves, a noticeable discrepancy exists. Both sets of results distinctly illustrate the progression of the models' intrinsic cognitive capabilities, yet highlight a shortfall in expressive capability, which we will explore further in the following section.

\subsection{SFT and RLHF: Aligning Expressive and Cognitive Capabilities}
This section explores how the SFT and RLHF stages align cognitive and expressive capabilities. Post-Pretraining, despite high cognitive accuracy, the models often fail to deliver correct token-level answers. We assess expressive capability as outlined in Section~\ref{subsec: The definitions and the quantification of cognitive and expressive capabilities}, finding both zero-shot and few-shot performances significantly lagging behind cognitive accuracy measured by Algorithm~\ref{alg: representation engineering} (see Figures~\ref{fig:gap between cognitive capability and expressive capability in SFT and PPO} and~\ref{fig:bridging the gap between the expressive capability and the cognitive capability}). Case studies in Appendix~\ref{appen:sec:case study} reveal instances of incoherent responses, suggesting that while advanced cognition may be achieved late in Pretraining, guiding accurate expressions through zero-shot or few-shot approaches remains challenging. SFT and RLHF effectively reduce this cognitive-expressive discrepancy.

Leveraging the pretrained Baichuan-33B model with 3.0T training data, we developed variants through SFT and RLHF. The SFT phase involved training over 4 epochs, each with 1M tokens. For RLHF, we initially trained a reward model on preference-ranked data annotated by our in-house annotation team, using the pretrained Baichuan-33B as a base. This was followed by implementing the standard PPO pipeline, commencing from the SFT model at epoch 4.

We assessed cognitive and expressive capabilities across various SFT epochs and the concluding PPO model, with findings illustrated in Figure~\ref{fig:gap between cognitive capability and expressive capability in SFT and PPO}. Key observations include: 1) Cognitive capability remains relatively stable throughout SFT and RLHF phases. 2) Expressive capability significantly improves during SFT, eventually nearing but not surpassing cognitive capability. Observation 1 corroborates Section~\ref{subsec: Pretraining building the cognitive capability}'s assertion that cognitive development primarily transpires during Pretraining. Observation 2 highlights the pivotal role of SFT (and to a lesser extent, RLHF) in enhancing expressive capability, suggesting cognitive capability as a potential ceiling for expressiveness. Approaches to optimize expressive capability and narrow this gap are explored in Section~\ref{sec:Methods for Bridging the Gap}.

\begin{remark}
The quantification of cognitive capability, conducted in the hidden space via Principal Component Analysis (PCA), presents a non-trivial approach for assessing the internal capabilities of LLMs. This opens avenues for employing additional linear analysis techniques for model analysis or control, as evidenced in recent studies such as~\citet{zou2023representation, liu2023aligning}. These methods will constitute the core of our forthcoming research endeavors.
\end{remark}
\begin{figure}[t]
\begin{center}
\centerline{\includegraphics[width=\columnwidth]{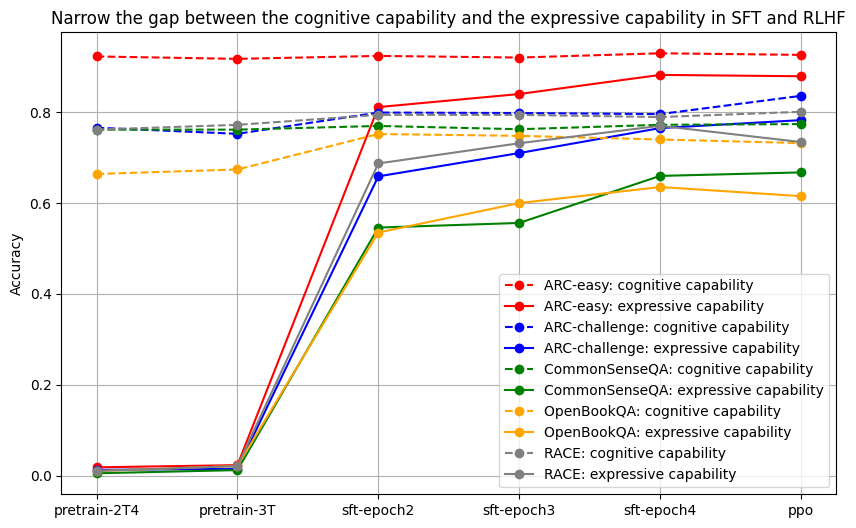}}
\caption{
Illustration of the diminishing gap between expressive and cognitive capabilities in SFT and RLHF. Each SFT epoch processes 1 million tokens. The dotted line signifies the cognitive capability, established during the Pretraining phase and acting as the upper boundary for expressive capability. The solid line represents the expressive capability. The diagram highlights the gradual reduction of the disparity between these capabilities as the model undergoes further refinement through SFT and RLHF.
}
\vspace{-25pt} 
\label{fig:gap between cognitive capability and expressive capability in SFT and PPO}
\end{center}
\end{figure}


\subsection{Statistical Correlation between Cognitive Capability and Expressive Capability}
In this subsection, we evaluate the consistency between two quantification methods and investigate the correlation between cognitive and expressive capabilities using hypothesis testing. Detailed results are presented in Table~\ref{appen: table: hypothesis testing probability} within Appendix~\ref{appen:subsec:hypothesis testing}. An illustrative example for the RACE dataset is provided in Table~\ref{table: hypothesis testing probability}.

During quantification of the capabilities, the LLM answers single-choice questions. Consistency between the two quantification methods for a question is established when both yield the same outcome—correct or incorrect. We assess a model's consistency on a dataset by computing the ratio of questions where the methods concur to the total question count. These consistency metrics are documented in Column 3 of Table~\ref{table: hypothesis testing probability}. An upward trend in consistency is noted with progressing SFT epochs, indicative of the expressive capability increasingly mirroring the stable cognitive capability at a granular level.

To assess the correlation between cognitive and expressive capabilities, we employ hypothesis testing. We denote the accuracy from cognitive capability quantification as cognitive accuracy $a^{\text{cog}}$, and that from expressive capability quantification as expressive accuracy $a^{\text{exp}}$. Our null hypothesis assumes $a^{\text{exp}}$ and $a^{\text{cog}}$ are independent. Under this premise, the consistency count across methods for a set of questions is expected to adhere to a binomial distribution $\mathcal{B}(s, 1-(1-a^{\text{exp}}) \times (1-a^{\text{cog}}))$, where $s$ is the total question count. We then calculate the likelihood of observing the actual consistency level, finding it to consistently be less than $0.01\%$ across diverse datasets. Such low probabilities strongly suggest the null hypothesis to be improbable, thereby indicating a significant correlation between $a^{\text{exp}}$ and $a^{\text{cog}}$.

\begin{table*}[t]
\caption{The statistical correlation analysis of the cognitive capability and the expressive capability in RACE. H-Test probability stands for hypothesis testing probability with the hypothesis that these two capabilities are irrelevant. We refer the expressive accuracy and the cognitive accuracy to the accuracy that are obtained by the quantification of the expressive capability and the cognitive capability as mentioned in Section~\ref{sec: main results}. }
\label{table: hypothesis testing probability}
\vskip 0.15in
\begin{center}
\begin{small}
\begin{sc}
\begin{tabular}{p{2.5cm}lcM{1.8cm}M{1.8cm}M{2.1cm}M{2cm}}
\toprule
Data set & Model & Consistency & Expressive Accuracy & Cognitive Accuracy & Binomial Distribution & H-Test probability\\
\midrule
RACE             & SFT Epoch2& 72.26\% & 68.72\% & 79.42\% & $\cB(3451, 0.6101)$& $<$0.01\%\\
                 & SFT Epoch3& 73.24\% & 73.17\% & 78.94\% & $\cB(3451, 0.6341)$& $<$0.01\%\\
                 & SFT Epoch4& 74.24\% & 78.94\% & 74.00\% & $\cB(3451, 0.6563)$& $<$0.01\%\\
                 & PPO       & 73.92\% & 77.01\% & 80.10\% & $\cB(3451, 0.6410)$& $<$0.01\%\\
\bottomrule
\end{tabular}
\end{sc}
\end{small}
\end{center}
\vskip -0.1in
\end{table*}

\subsection{Assessment of Cognitive Convergence across Training Phases}
In this subsection, we explore the convergence of the cognitive capability by assessing consistency across consecutive checkpoints. We quantify cognitive capabilities for adjacent model checkpoints and compute the inconsistency ratio—the proportion of questions where the two models diverge—to the total question count. A lower inconsistency ratio indicates smaller cognitive discrepancies between the models. The findings are depicted in Figure~\ref{fig:inconsistency trend}.

The results indicate a steady decrease in inconsistency among consecutive models during Pretraining, signaling enhanced stability in cognitive responses. Transitioning to SFT with a novel corpus leads to an initial increase in inconsistency, which subsequently diminishes. Notably, inconsistency experiences a minor uptick following RLHF, hinting that the introduction of SFT and RLHF momentarily injects cognitive uncertainty, despite a prevailing trend of improved consistency throughout Pretraining.

\begin{figure}[t]
\begin{center}
\centerline{\includegraphics[width=\columnwidth]{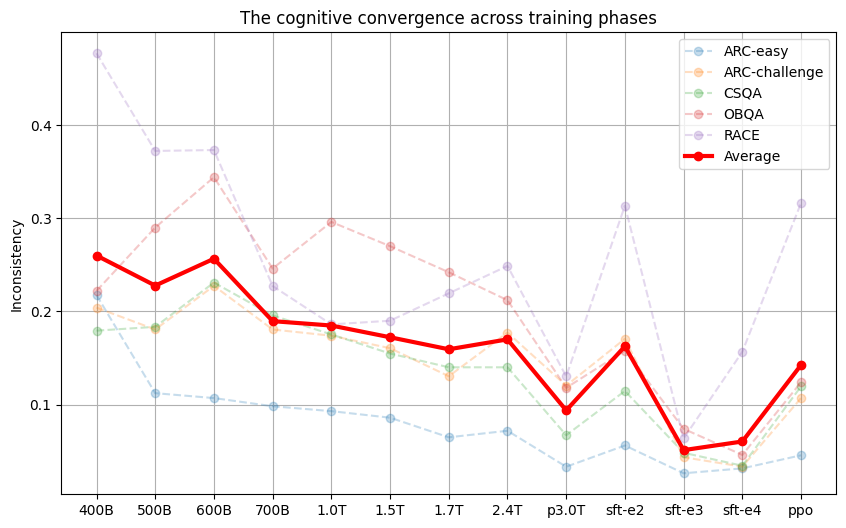}}
\caption{Convergence of cognitive capabilities by assessing consistency across consecutive checkpoints. The y-axis quantifies the discrepancy in judgments between each model and its predecessor. The red solid line is the average result. }
\vspace{-10pt} 
\label{fig:inconsistency trend}
\end{center}
\end{figure}

\section{Theoretical Analysis}
\label{sec: theoretical analysis}

\subsection{Explanation of the Capability Gap}
\label{subsec: explanation of the capability gap}
The following theorem articulates our rationale for the observed gap between cognitive and expressive capabilities.
\begin{theorem}
The gap between cognitive and expressive capabilities stems from the superior mapping efficiency of the function \( f(\cdot) \) compared to \( g(\cdot) \), along with the greater linear separability afforded by the hidden space \( \mathcal{R}^m \) over the token-level space \( \mathcal{T} \).
\end{theorem}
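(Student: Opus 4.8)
The plan is to recast both capabilities as linear decision problems on the \emph{same} hidden vector $c = f(x) \in \cR^m$, so that the asserted gap decomposes into two sources of slack: the function class realized by $f$ versus $g$, and the freedom to pick a separating direction in $\cR^m$ versus the constraint imposed by routing through $\cT$. First I would observe that the cognitive quantification of Algorithm~\ref{alg: representation engineering} reports $\max_i \mathrm{Acc}_i$ over an \emph{unconstrained} unit direction $v_i$ (only the sign $S_i$ is fixed using labels), whereas the expressive quantification sends $c$ through $g$, whose final step is the fixed vocabulary linear layer $W$ (with rows $W_t$ indexing tokens $t \in \cT$) followed by an $\argmax$. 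Hence the expressive decision between two option tokens $t_1, t_2$ is controlled by the single fixed direction $W_{t_1} - W_{t_2}$, while the cognitive decision is free to use the optimal PCA direction. Granting the linear-encoding hypothesis cited in Section~\ref{sec:related work}, the cognitive classifier optimizes over a superset of the directions available to $g$, which yields the clean inequality $a^{\text{cog}} \ge a^{\text{exp}}$ and identifies the gap as the suboptimality of $g$'s fixed readout.

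Second, I would formalize the \emph{mapping efficiency} comparison. Here $f$ is a deep nonlinear composition of attention and MLP blocks carrying the bulk of the parameters, while $g$ is essentially a single affine map into logits. Treating the correct answer as a random variable $L$, the deterministic pipeline makes $L - c - y$ a Markov chain ($y$ being a function of $c$), so the data-processing inequality gives $I(L; c) \ge I(L; y)$, with equality only when $g$ is information-lossless for the task. The cognitive probe reads from $c$ and the expressive readout only from $y$, so the gap is precisely the task-relevant information discarded by the shallow map $g$ but preserved by the deep map $f$.

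Third, I would pin down the \emph{linear separability} comparison between $\cR^m$ and $\cT$. In $\cR^m$ the classifier has $m$ continuous degrees of freedom, and under the hypothesis that concepts occupy linear subspaces a high-margin separating hyperplane exists. By contrast the output lives in the discrete set $\cT$ and is reachable only through the \emph{fixed} family of directions $\{W_{t_1} - W_{t_2}\}$, so the best separation attainable through $g$ is limited by how well the optimal cognitive direction $v^\star$ lies in their span. I would make this quantitative as a bound $a^{\text{exp}} \ge a^{\text{cog}} - \epsilon$, where the slack $\epsilon$ shrinks as $v^\star$ comes into alignment with $\mathrm{span}\{W_{t_1} - W_{t_2}\}$ --- exactly the alignment that SFT and RLHF are hypothesized to improve by tuning $W$, thereby closing the gap seen in Figure~\ref{fig:gap between cognitive capability and expressive capability in SFT and PPO}.

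The main obstacle is that the statement fuses a clean, provable inequality with two inherently informal notions. Establishing $a^{\text{cog}} \ge a^{\text{exp}}$ as an optimize-over-directions versus fixed-direction comparison is straightforward once the modeling assumptions are granted. The genuinely hard part is rigorously comparing \emph{mapping efficiency} of $f$ and $g$ and \emph{linear separability} across a continuous space and a discrete one, since neither is a standard quantity; I expect to argue semi-formally, stating the linear-encoding and fixed-readout assumptions explicitly and supporting the efficiency claim with the data-processing and capacity argument above rather than a fully tight bound. Additional care is needed because PCA is unsupervised, so I must check that the extracted direction is genuinely admissible and not covertly fit to labels, otherwise the upper-bound reading of $a^{\text{cog}}$ would be compromised.
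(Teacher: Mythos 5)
Your proposal takes a fundamentally different route from the paper, and the difference matters: the paper never attempts a mathematical derivation of this theorem at all. Its ``proof'' is empirical corroboration --- a linear-kernel SVM is trained on the hidden embeddings $c$ extracted from HalluQA prompts and compared against direct token generation on the same data (Table~\ref{table: SVM and direct token generation}), showing SVM accuracy near $0.868$ and essentially invariant across Pretraining/SFT/RLHF, while direct generation climbs from $0.061$ to $0.563$; a UMAP visualization (Figure~\ref{appen:fig:umap visualization}) then illustrates the separability difference between the two decision landscapes. So where you try to deduce the gap from modeling assumptions, the paper demonstrates it by exhibiting a concrete linear classifier on $\mathcal{R}^m$ that outperforms the token-level readout, and by showing the two accuracies evolve differently under SFT/RLHF. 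Your data-processing-inequality observation ($I(L;c) \ge I(L;y)$ since $y$ is a function of $c$) is a legitimate and arguably cleaner way to express the ``mapping efficiency'' clause than anything in the paper, though it only bounds information content, not the accuracy of the specific probes being compared.

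However, your first step contains a genuine flaw. You claim the cognitive quantification ``optimizes over an unconstrained unit direction'' and therefore dominates the fixed readout of $g$, yielding $a^{\text{cog}} \ge a^{\text{exp}}$ as a clean inequality. But Algorithm~\ref{alg: representation engineering} does not optimize over directions: $v_i$ is the first principal component of the training embeddings --- an unsupervised, fixed choice --- and the maximization is only over Transformer blocks $i$, with labels used only to fix the sign $S_i$. A PCA direction can in principle be worse than the vocabulary-difference direction $W_{t_1}-W_{t_2}$, so no superset argument applies. Indeed the paper's own data refutes the inequality you want to prove: in Table~\ref{table: hypothesis testing probability}, RACE at SFT Epoch 4 has expressive accuracy $78.94\%$ exceeding cognitive accuracy $74.00\%$. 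The paper accordingly treats cognition only as a soft, statistically supported ceiling (via hypothesis testing and the gap-bridging experiments), never as a provable upper bound. To salvage your argument you would need to replace the PCA probe with the accuracy-optimal linear probe on $c$ --- at which point your first and third steps collapse into essentially the SVM-versus-direct-generation comparison that the paper runs empirically.
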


To corroborate this theorem, we use a simple linear classifier within both $\mathcal{R}^m$ and $\mathcal{T}$, utilizing the HalluQA dataset~\cite{cheng2023evaluating} designed for assessing hallucination phenomena in Chinese LLMs via counterfactual question-answer pairs. This dataset facilitates the creation of clear positive and negative examples, essential for our comparative analysis:

\textbf{Linear SVM on $\mathcal{R}^m$}: In the trainset, each (QUESTION, \{ANSWER\}) pair is processed with Template A (see Appendix~\ref{appen:sec:template}) to extract the embedding $c$ from a chosen layer. A linear-kernel SVM is then trained on these embeddings $\{c\}$, with its classification accuracy assessed on the testset.

\textbf{Direct token generation on $\mathcal{T}$}: The model's accuracy in generating responses is evaluated on the test set, comparing against the provided correct answers as the reference.

The trainset is integrated into the SFT training data. The outcomes, detailed in Table~\ref{table: SVM and direct token generation}, reveal a pronounced disparity between the accuracies of SVM classifications and direct token generation. Remarkably, the accuracy of SVM classifications stays relatively constant, whereas direct token generation accuracy progressively enhances with additional SFT epochs and through the implementation of RLHF.

This discrepancy underscores the distinct classification landscapes offered by $\cR^m$ and $\cT$. The linear SVM delineates a hyperplane in $\cR^m$ that effectively segregates the data into two categories, in contrast to the hyperplane in $\cT$ inferred by direct token generation. The comparative analysis reveals that $\cR^m$ facilitates lower intra-class variance and higher inter-class variance, indicating more pronounced class separability than $\cT$.

\begin{figure}[t]
\begin{center}
\centerline{\includegraphics[width=0.8\columnwidth]{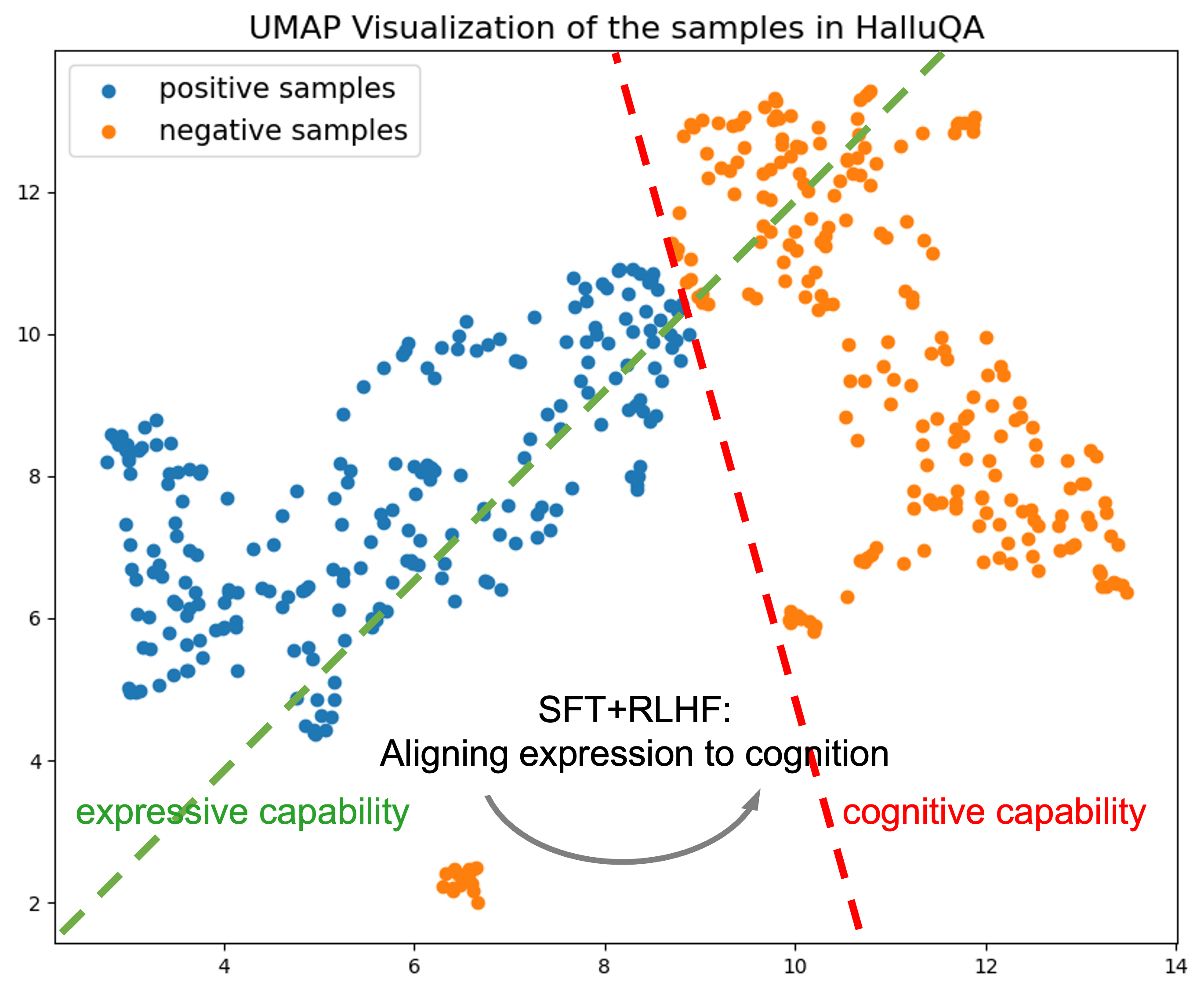}}
\caption{UMAP visualization of HalluQA classifier in Baichuan-33B. The red line represents the delineation by SVM on neuron output while the green line represents that of token-level output. The blue dots and orange dots represents positive samples and negative samples in the datasets respectively. SFT and RLHF demonstrates the potential to align the expressive capability to the cognition capability in the fine-tuning stages.}
\label{appen:fig:umap visualization}
\vspace{-25pt}
\end{center}
\end{figure}

Figure~\ref{appen:fig:umap visualization} conceptualizes this distinction through UMAP \cite{mcinnes2018umap} reduction $\cT$ from the final transformer block to two dimensions. The delineation by the SVM and direct token generation classifiers, represented by red and green lines respectively, visually captures the gap between cognitive and expressive capabilities. SFT and RLHF demonstrates the potential to bridge this gap.

\begin{table}[t]
\caption{The performance gap between the Linear SVM and the direct token generation on HalluQA in Baichuan-33B. }
\label{table: SVM and direct token generation}
\begin{center}
\begin{small}
\begin{sc}
\begin{tabular}{llllll}
\toprule
Model & Pretrain & SFT-2 & SFT-3 & SFT-4 & PPO \\
\midrule
L-SVM & 0.868 & 0.875 & 0.868 & 0.868 & 0.868 \\
Direct & 0.0607 & 0.493 & 0.509 & 0.513 & 0.563 \\
\bottomrule
\end{tabular}
\end{sc}
\end{small}
\end{center}
\vskip -0.1in
\end{table}

\subsection{Establishment of Cognitive Capability}
\label{subsec: the establishment of the cognitive capability}
To elucidate the evolution of cognitive capabilities and pinpoint the layer that most significantly represents cognitive capability within the LLM, we apply Algorithm~\ref{alg: representation engineering} to examine various layers in both Baichuan-7B and Baichuan-33B models. The findings for Baichuan-33B are depicted in Figure~\ref{fig:the establishment of cognitive capability}, while results for Baichuan-7B are presented in Figure~\ref{appen:fig:the establishment of cognitive capability for Baichuan-7B} within Appendix~\ref{appen:subsec:Pretraining process of Baichuan-7B}.

The peak accuracy of the curve serves as an indicator of cognitive capability. During the Pretraining and SFT phases, there is a notable increase in cognitive capability as training data volume expands. The effect of RLHF on cognitive capability is ambiguous, seemingly influenced by the training data's distribution for the reward model. Performance assessments on datasets like ARC-easy, ARC-challenge, and CommonSenseQA show the PPO-enhanced model outperforming its SFT-only version. However, in OpenBookQA evaluations, the introduction of PPO slightly detracts from performance, hinting at a potential data-specific bias in the reward model.

The curve's trajectory offers valuable insights into the model's capability development. In the initial phase of Pretraining, with training data under 1T, the curve's peak typically aligns with the model's mid-section. For instance, in the Baichuan-33B model, using 700B tokens for Pretraining, the peak value, as assessed through ARC-Challenge, is identified in the 26th Transformer block, depicted by the orange curve in the third figure of Figure~\ref{fig:the establishment of cognitive capability}. As the training data expands to 1.5T and 3T, the accuracy within the model's final 30 layers stabilizes at a high level. This pattern, observed across various tests, suggests a characteristic feature of the Pretraining phase nearing convergence.

\begin{remark}
The establishment process of layer-wise cognitive capability is divided into two periods, the bell curve (in the early stages of Pretraining, with less training data) and the plateau curve (in the late stages of Pretraining, with more training data). In the plateau period, the cognitive capability reaches its peak at a certain intermediate layer. Based on this, we believe that cognitive capability may be established in the first few layers of the model, while the pleatue layers continuously strengthen this cognitive capability, and are mapped to expressive capability in the final linear layer. This phenomenon may result largely from \emph{Residual Connection} and \emph{pre-Layer Normalization} in the model architecture. Besides, the appearance of the cognitive capability plateau curve may represent some redundancy of the model. The results have been observed and discussed in a previous work~\cite{men2024shortgpt} and we emphasize it here. We leave more discussion in Section~\ref{supple:sec:Further Discussion about the Cognition Establishment Process} in the supplementary material.
\end{remark}

The SFT and RLHF phase mainly influences the performance of the model's final layers. The results in ARC-challenge and OpenBookQA (see Figure~\ref{appen:fig:the establishment of cognitive capability}) illustrates a notable performance dip in the ultimate layer of the pretrain-3T model compared to its preceding layers. Nonetheless, SFT and RLHF helps in rectifying this performance gap. The efficacy of the model's final layers, especially the last one, is significantly associated with its expressive capability. This relationship will be explored in the following subsection.

\begin{figure}[t]
\begin{center}
\centerline{\includegraphics[width=\columnwidth]{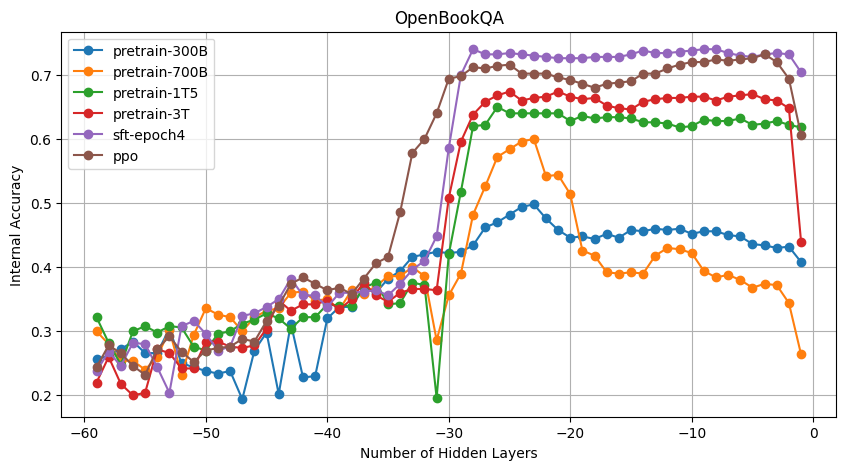}}
\caption{Layer-wise performance of linear representations in Baichuan-33, shedding light on the intricate architecture underlying cognitive capability formation.}
\vspace{-15pt} 
\label{fig:the establishment of cognitive capability}
\end{center}
\end{figure}

\subsection{Establishment of Expressive Capability}
This subsection explores the significance of the vocabulary linear layer in shaping the expressive capabilities of a model. Positioned as the final MLP (Multi-Layer Perceptron) layer with trainable parameters within a decoder-only LLM, the vocabulary linear layer maps the last transformer block's output to the vocabulary space $\cT$. During the model's greedy output generation, this layer effectively assesses the similarity between its input and each of its row vectors (each corresponding to a token in the vocabulary), ultimately selecting the token that exhibits the highest similarity for its prediction. This process not only underpins the model's ability to generate coherent and contextually relevant text but also forges a structural link between cognitive and expressive capabilities. Earlier sections have illustrated that as the model approaches the convergence point during Pretraining, the output performance of the last few transformer blocks stabilizes, reflecting the model's cognitive capabilities. Consequently, the effective training of the vocabulary linear layer is paramount, as it directly influences the model's ability to articulate its 'thoughts' and knowledge accurately.

To demonstrate the dynamics of the vocabulary linear layer, we designed an experiment using a set of prompts $\{ x_i \}_{i=1}^m$ and performed inference across a series of models. We define the output from the final transformer block for each prompt $x_i$ in model $M$ as $c_{-1}(x_i, M)$.  The weights of model $M$'s vocabulary linear layer are represented by $\text{Vocal}(M)$.
We used a series of Baichuan-33B models $\{ M_j | j =1,2,\ldots,n\}$ across Pretraining, SFT, and RLHF stages. An average output $c_{-1}(x_i) = \frac{1}{n} \sum_{j=1}^n c_{-1}(x_i, M_j)$ is computed. Subsequently, $\text{Vocal}(M_i)$ was adjusted to examine the resulting output distribution through:
\$
d_{M_j}(x_i) = \text{softmax}(c_{-1}(x_i) \cdot \text{Vocal}(M_j)),
\$
where $c_{-1}(x_i)$ acts as a consistent reference for input $x_i$ across all models, enabling the assessment of changes in the vocabulary linear layer via the variation in average KL divergence: $\text{KL}(M_j || M_k) = \frac{1}{m} \sum_{i=1}^{m} \text{KL}(d_{M_j}(x_i) || d_{M_k}(x_i))$. The results are depicted in Figure~\ref{fig:changes in KL divergence}.

Using HalluQA for prompt input, our findings are depicted in Figure~\ref{fig:changes in KL divergence}. Through analysis of various models spanning Pretraining, SFT and RLHF, and by tracking KL divergence changes per 1M training data increments, we found that adjustments to the vocabulary linear layer weights during SFT and RLHF result in more significant KL divergence fluctuations than those observed during later Pretraining stages. This implies that the SFT phase, in particular, notably bolsters the model's expressive capabilities, thus narrowing the gap with its cognitive abilities. This observation is consistent with the apparent shift in LLM response behaviors following SFT and RLHF, suggesting a fundamental shift in the model's token response generation.

\begin{figure}[t]
\begin{center}
\centerline{\includegraphics[width=\columnwidth]{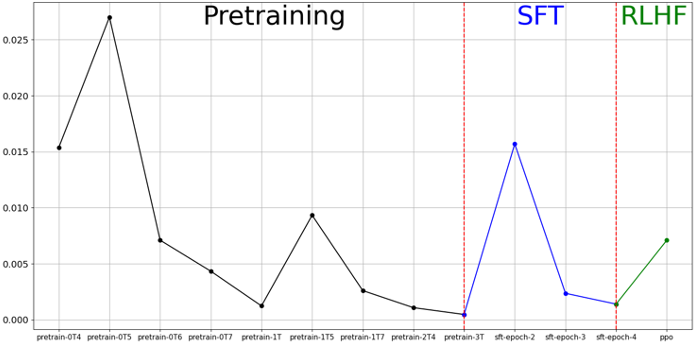}}
\caption{Dynamics of KL divergence per 1M training data, utilizing different vocabulary linear layers, evaluated by HalluQA. }
\vspace{-25pt} 
\label{fig:changes in KL divergence}
\end{center}
\end{figure}

\section{Methods for Bridging the Gap}
\label{sec:Methods for Bridging the Gap}

\begin{figure}[t]
\begin{center}
\centerline{\includegraphics[width=\columnwidth]{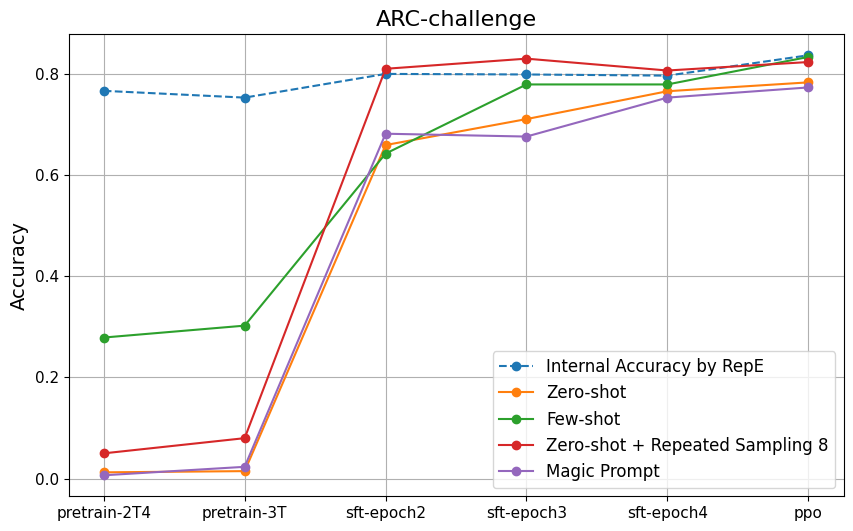}}
\caption{
Performance of the optimization-free methods aimed at bridging the gap between expressive and cognitive capabilities. 
}
\label{fig:bridging the gap between the expressive capability and the cognitive capability}
\vspace{-30pt} 
\end{center}
\end{figure}

In this section, we explore a selection of optimization-free approaches to evaluate their efficacy in narrowing the gap between cognitive and expressive capabilities.

\subsection{Few-shot Learning}
\label{subsec: few-shot learning}
In few-shot learning, we leverage a templated approach as delineated in Appendix~\ref{subsec: prompt few-shot}. This involves prefacing the input with a series of question-and-answer examples, thereby providing the model with a context that enhances its ability to understand and respond to new, similar queries. The underlying premise of this approach is that providing even a small set of examples can substantially enhance the model's capacity to draw upon its inherent knowledge, thereby amplifying its expressive capabilities.

\subsection{Repeated Sampling}
Building upon the concept of Rejection Sampling as outlined in the work of~\cite{touvron2023llama}, we explore the potential of repeated sampling as a means to extract potentially accurate responses. By generating multiple responses to a single question and considering the response set successful if at least one meets the reference answer, We can explore the upper limit of the model’s expressive capability. Specifically, we sample 8 responses per prompt independently, adopting the criterion that the model's output is deemed accurate if any of these responses is correct. 

\subsection{Magical Additional Prompt}
In light of findings from prior research~\cite{kojima2022large, yang2023large}, it has been suggested that the inclusion of specific, strategically crafted prompts—either preceding or following the primary query—can significantly augment an LLM's performance. To this end, we introduce what we term \emph{magical additional prompts} to our input:

\emph{Let's think step by step and take a deep breathe, the task is very important for human society!}

\subsection{Results Analysis}
The outcomes of our experiments, as depicted in Figure~\ref{fig:bridging the gap between the expressive capability and the cognitive capability}, indicate that both few-shot learning and repeated sampling methodologies exhibit considerable promise in amplifying the expressive capabilities of LLMs. However, the impact of the 'magical additional prompt' was found to be less pronounced. Notably, with repeated sampling employed up to eight times, the LLM's expressive capability is observed to match its cognitive capability. These findings suggest that through strategic prompt engineering, the model can achieve performance levels that surpass its expressive capabilities, hinting that the cognitive capabilities of the model implicitly set the upper bound for its expressive performance.

\section{Conclusion}
\label{sec:conclusion}


In this work, we delved into the distinctions between cognitive and expressive capabilities in LLMs, specifically focusing on the bilingual Baichuan-7B and Baichuan-33B series. Cognitive capability is quantified via linear representations in the hidden space, while expressive capability is evaluated through direct token outputs. Our extensive experimentation, encompassing reasoning, common-sense comprehension, and logical inference, reveals that cognitive capabilities are primarily developed during Pretraining, with expressive capabilities further refined in subsequent SFT and RLHF phases. Statistical analysis confirms a strong correlation between these capabilities. Theoretically, we attribute the capability gap to the superior linear separability of the hidden space $\cR^m$ over the token-level space $\cT$. Examination of various optimization-free strategies for mitigating this gap shows that methods like repeated sampling and few-shot learning significantly improve expressive capabilities, aligning them more closely with cognitive capacities. Having established a correlation between linear spaces and the cognitive capabilities of language models, the extraction and the transformation of features within these linear spaces becomes a compelling avenue for future research. 

\section*{Acknowledgements}
The authors would like to thank Mingyu Xu for the valuable input concerning the effect of residual connection and pre-Layer Normalization. The authors also would like to thank Yiyu Li, Luyao Ma, Lin Bo and Yulong Wang from Baichuan AI for helpful discussions and feedback. 

\section*{Impact Statement}

This paper presents work whose goal is to advance the field of 
Machine Learning. There are many potential societal consequences 
of our work, none which we feel must be specifically highlighted here.


\bibliography{example_paper}
\bibliographystyle{icml2024}

\newpage
\appendix
\onecolumn

\section{Prompt Template}

\subsection{Wrapper prompt for the quantification of cognitive and expressive capabilities}
\label{appen:sec:template}

To quantify the cognitive capability and the expressive capability, we use the following templates as the wrapper for each $(<\text{QUESTION}>, <\text{CHOICE}_i>)$ pair:

\textbf{Template A}

\emph{Consider the correctness of the answer to the following question. Question: }\text{$<\text{QUESTION}>$}\emph{, Answer: }$<\text{CHOICE}>$.\emph{ Directly answer correct or wrong:}

\textbf{Template B}

$<\text{QUESTION}>$ \emph{1.}$<\text{CHOICE}_1>$\emph{ 2.}$<\text{CHOICE}_2>$ \emph{3.}$<\text{CHOICE}_3>$\emph{ 4.}$<\text{CHOICE}_4>$. \emph{Choose only one answer directly.}

\subsection{Additional example prompt for few-shot learning}
\label{subsec: prompt few-shot}
For each $(<\text{QUESTION}>, <\text{CHOICE}_i>)$ pair, we select 2-shot exclusive examples shown as follows:

\begin{emphenv}
Which statement best explains why photosynthesis is the foundation of most food webs?

A. Sunlight is the source of energy for nearly all ecosystems.

B. Most ecosystems are found on land instead of in water.

C. Carbon dioxide is more available than other gases.

D. The producers in all ecosystems are plants.

Choose only one answer directly. A. Sunlight is the source of energy for nearly all ecosystems.

\medskip 
Which piece of safety equipment is used to keep mold spores from entering the respiratory system?

A. safety goggles

B. breathing mask

C. rubber gloves

D. lead apron

Choose only one answer directly. B. breathing mask.

\medskip 
\end{emphenv}

$<\text{QUESTION}>$ 

\emph{A.}$<\text{CHOICE}_1>$ 

\emph{B.}$<\text{CHOICE}_2>$ 

\emph{C.}$<\text{CHOICE}_3>$ 

\emph{D.}$<\text{CHOICE}_4>$.
\begin{emphenv}
Choose only one answer directly.
\end{emphenv}

\section{Detailed Experimental Settings}

\begin{table}[h]
\caption{Size for each datasets.}
\label{supple:table:Size for each datasets}
\begin{center}
\begin{small}
\begin{sc}
\begin{tabular}{lll}
\toprule
Dataset Name & trainset size & testset size\\
\midrule
ARC-easy    &  2251 & 2376 \\
ARC-challenge   &  1119 & 1172 \\
CSQA    &  	9741 & 1140 \\
OBQA   &  4957 & 500 \\
RACE    &  62445 & 	3498 \\
HalluQA   &  240 & 63 \\
\bottomrule
\end{tabular}
\end{sc}
\end{small}
\end{center}
\vskip -0.1in
\end{table}

\begin{table}[h]
\caption{Direct token generation hyperparameters}
\label{table: Direct token generation parameters}
\begin{center}
\begin{small}
\begin{sc}
\begin{tabular}{ll}
\toprule
Term & parameter\\
\midrule
temperature    &  1.2 \\
top p    & 0.9 \\
top k & 50\\
max tokens & 2048\\
repetition penalty & 1.05\\
\bottomrule
\end{tabular}
\end{sc}
\end{small}
\end{center}
\vskip -0.1in
\end{table}

\section{Further Discussion about the Cognition Establishment Process}
\label{supple:sec:Further Discussion about the Cognition Establishment Process}
As is shown in Figure~\ref{fig:the establishment of cognitive capability}, the establishment process of layer-wise cognitive capability is divided into two periods, the bell curve (in the early stages of Pretraining, with less training data) and the plateau curve (in the late stages of Pretraining, with more training data). In the plateau period, the cognitive capability reaches its peak at a certain intermediate layer. Based on this, we believe that cognitive capability may be established in the first few layers of the model, while the pleatue layers continuously strengthen this cognitive capability, and are mapped to expressive capability in the final linear layer. This phenomenon may result largely from \emph{Residual Connection} and \emph{pre-Layer Normalization} in the model architecture. Here is our deduction process.\footnote{This section is contributed by Mingyu Xu and readers can refer to~\cite{men2024shortgpt} for more detail.}

\begin{lemma}
Suppose that $x$ is with components that are independent and identically distributed with a zero mean. The magnitude of the $x$ is directly proportional to its standard deviation: $$\Vert x \Vert = \sqrt{d} \cdot std(x)$$.
\end{lemma}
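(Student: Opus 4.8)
The plan is to observe that the claimed identity follows directly from the definition of the standard deviation once we use the zero-mean assumption. Writing $x = (x_1, \dots, x_d)$ with each $x_i$ having zero mean, I would begin from the Euclidean norm $\Vert x \Vert = \sqrt{\sum_{i=1}^d x_i^2}$ and relate the sum $\sum_{i=1}^d x_i^2$ to the (population) variance of the components.

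First I would recall that the empirical variance of the components, treated as $d$ samples, is $\mathrm{std}(x)^2 = \frac{1}{d} \sum_{i=1}^d (x_i - \bar{x})^2$, where $\bar{x} = \frac{1}{d} \sum_{i=1}^d x_i$ is the sample mean. Under the zero-mean hypothesis $\bar{x} = 0$, this collapses to $\mathrm{std}(x)^2 = \frac{1}{d} \sum_{i=1}^d x_i^2 = \frac{1}{d} \Vert x \Vert^2$. Rearranging gives $\Vert x \Vert^2 = d \cdot \mathrm{std}(x)^2$, and taking the nonnegative square root yields the stated $\Vert x \Vert = \sqrt{d} \cdot \mathrm{std}(x)$.

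The one subtlety I would flag is the precise meaning of $\mathrm{std}(x)$ and the role of the i.i.d.\ assumption. The identity is exact only if $\mathrm{std}(x)$ denotes the sample standard deviation computed with the $\frac{1}{d}$ normalization \emph{and} the sample mean is exactly zero; if instead $\mathrm{std}(x)$ refers to the true distributional standard deviation and the mean is only assumed to be zero in expectation, then the relation holds in expectation or asymptotically as $d \to \infty$ by the law of large numbers, since $\frac{1}{d}\sum_i x_i^2 \to \mathbb{E}[x_1^2] = \mathrm{std}(x)^2$. I would therefore make explicit which convention is intended; for a clean exact statement I expect the sample-based reading with $\bar{x} = 0$ to be the one in force.

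The main obstacle here is not computational — the algebra is a one-line rearrangement — but interpretive: pinning down whether the i.i.d.\ hypothesis is doing genuine work (an approximation/concentration argument) or is merely flavor text accompanying an exact finite-$d$ identity that only needs zero mean. Once that convention is fixed, the proof is immediate.
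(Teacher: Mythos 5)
Your proof is correct; note, though, that the paper offers no proof of this lemma at all --- it is stated as a background fact, and the proof environment that follows it in the source belongs to Lemma 3 --- so there is nothing to compare against step by step. Your derivation under the sample-standard-deviation convention (with the $\tfrac{1}{d}$ normalization and vanishing sample mean, so that $\mathrm{std}(x)^2 = \tfrac{1}{d}\sum_{i=1}^d x_i^2 = \tfrac{1}{d}\Vert x \Vert^2$) is the cleanest exact reading, and your interpretive caveat is exactly the right thing to flag. The paper's downstream usage settles the ambiguity in favor of your second reading: in the proof of Lemma 3 the authors write $\Vert x_L \Vert = \sqrt{kLd}$ where $k$ is the \emph{distributional} variance of each layer's contribution, so $\mathrm{std}(\cdot)$ there means the population standard deviation, and the identity is really the concentration statement $\tfrac{1}{d}\sum_i x_i^2 \approx \mathrm{E}[x_1^2]$, exact only in expectation or asymptotically as $d \to \infty$ by the law of large numbers. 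That is precisely where the i.i.d.\ hypothesis does genuine work rather than being flavor text, and it means the lemma, as actually used by the paper, is a high-dimensional approximation rather than a finite-$d$ identity --- a distinction the paper glosses over and you correctly surfaced.
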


\begin{lemma}
The variance of the sum of independent variables equals the sum of the variances. $$Var(x+y)=Var(x)+Var(y)$$
\end{lemma}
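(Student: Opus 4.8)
The plan is to prove the identity directly from the definition of variance together with the linearity of expectation, reserving the independence hypothesis for a single final step. Since this is a textbook-style fact, the work lies in organizing the expansion cleanly rather than in any hard estimate.

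First I would expand the left-hand side via the definition $\mathrm{Var}(x+y) = E\big[(x+y)^2\big] - \big(E[x+y]\big)^2$ and distribute the square, producing the terms $E[x^2]$, $E[y^2]$, and $2E[xy]$, minus the square of $E[x]+E[y]$ (here invoking linearity of expectation on $E[x+y]$). Regrouping, I would match $E[x^2]-(E[x])^2$ with $\mathrm{Var}(x)$, match $E[y^2]-(E[y])^2$ with $\mathrm{Var}(y)$, and collect the remaining cross terms into $2\big(E[xy]-E[x]E[y]\big) = 2\,\mathrm{Cov}(x,y)$. This yields the general decomposition $\mathrm{Var}(x+y) = \mathrm{Var}(x) + \mathrm{Var}(y) + 2\,\mathrm{Cov}(x,y)$, valid for any square-integrable $x,y$.

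The decisive step is then to show that independence forces $\mathrm{Cov}(x,y)=0$. For independent $x$ and $y$ the joint distribution factorizes, so $E[xy]=E[x]E[y]$ by Fubini/Tonelli (or, in the discrete case, by summing the product against the product measure), which makes the covariance term vanish and leaves $\mathrm{Var}(x+y)=\mathrm{Var}(x)+\mathrm{Var}(y)$.

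Since there is no genuine analytic difficulty, the only points requiring care are bookkeeping ones. First, I would flag that independence is stronger than needed — mere uncorrelatedness ($\mathrm{Cov}(x,y)=0$) already suffices — to keep the hypothesis honest. Second, and most relevant to how the paper uses this lemma, I would extend the two-variable statement by induction to a finite sum $\mathrm{Var}\big(\sum_i x_i\big) = \sum_i \mathrm{Var}(x_i)$ of (pairwise-)independent summands, since the residual-connection argument in Section~\ref{supple:sec:Further Discussion about the Cognition Establishment Process} accumulates many per-layer contributions; the inductive step simply reapplies the identity above to the partial sum and the next summand. I expect the only mild obstacle to be confirming that the relevant second moments are finite so that every manipulation is legitimate, rather than anything in the algebra itself.
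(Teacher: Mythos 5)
Your proof is correct. The paper itself states this lemma without any proof, treating it as a textbook fact to be invoked in the proof of Lemma 3, so there is no paper argument to diverge from; your expansion $\mathrm{Var}(x+y)=\mathrm{Var}(x)+\mathrm{Var}(y)+2\,\mathrm{Cov}(x,y)$ followed by the vanishing of the covariance under independence is the standard derivation and is exactly what the paper implicitly relies on. Your two bookkeeping remarks are also apt: uncorrelatedness indeed suffices, and the finite-sum extension by induction is precisely the form in which the lemma is actually used, since the proof of Lemma 3 iterates $\mathrm{Var}(x_{L+1})=\mathrm{Var}(x_L)+\mathrm{Var}(f_L(LN(x_L)))$ across all $L$ layers to obtain $\Vert x_L \Vert = \sqrt{kLd}$.
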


\begin{lemma}
The relationship between the magnitude of the output $x_L$ and the depth of layer $L$ is as follow: $$\Vert x_L \Vert \sim O(\sqrt{L})$$
\end{lemma}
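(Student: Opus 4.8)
The plan is to unroll the pre-Layer-Normalized residual recurrence and then apply Lemmas~1 and~2 in sequence. First I would write the hidden state at block $L$ as the running sum produced by the residual connections,
\[
x_L = x_0 + \sum_{\ell=1}^{L} \delta_\ell, \qquad \delta_\ell = f_\ell\bigl(\mathrm{LN}(x_{\ell-1})\bigr),
\]
where $x_0$ is the token embedding, $f_\ell$ is the attention-or-MLP sublayer of block $\ell$, and $\mathrm{LN}$ denotes pre-Layer Normalization. The decisive structural observation is that the residual stream is never rescaled: each block reads a \emph{normalized} copy of the stream, computes an increment $\delta_\ell$, and adds it back, so the magnitude of $x_L$ is governed entirely by how these increments accumulate rather than by any multiplicative growth.

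Next I would argue that pre-Layer Normalization makes each increment's scale independent of the accumulated magnitude. Because the sublayer input $\mathrm{LN}(x_{\ell-1})$ has unit variance by construction and the sublayer weights are of fixed scale, the components of $\delta_\ell$ have a variance that does not grow with $\ell$; write $Var(\delta_\ell) \approx \sigma^2$ for a constant $\sigma^2$ independent of $\ell$. Treating the increments as (approximately) independent, zero-mean, and identically scaled, Lemma~2 applied repeatedly gives
\[
Var(x_L) = Var(x_0) + \sum_{\ell=1}^{L} Var(\delta_\ell) \approx L\,\sigma^2,
\]
so that $std(x_L) \sim \sqrt{L}\,\sigma$. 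Finally I would invoke Lemma~1, whose hypotheses of i.i.d.\ zero-mean coordinates are met by the same reasoning, to convert standard deviation into magnitude:
\[
\Vert x_L \Vert = \sqrt{d}\cdot std(x_L) \sim \sqrt{d}\,\sigma\,\sqrt{L} = O(\sqrt{L}),
\]
which is precisely the claimed scaling.

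The hard part will be justifying the two approximations that let Lemmas~1 and~2 apply cleanly: the independence of the increments $\delta_\ell$ and the constancy of their per-coordinate variance. Strictly, $\delta_\ell$ is a deterministic function of $x_{\ell-1}$, hence of all earlier increments, so the terms are neither independent nor even uncorrelated in general, and Lemma~2 does not apply verbatim. The argument I would make is that pre-Layer Normalization is exactly what salvages the estimate: by stripping the magnitude of $x_{\ell-1}$ before it enters $f_\ell$, it removes the feedback loop through which an already-large stream would otherwise amplify later increments, keeping $Var(\delta_\ell)$ bounded and $\ell$-independent. I would then treat the cross-covariance terms $Cov(\delta_i,\delta_j)$ as negligible on average, so that their omission perturbs the $L\sigma^2$ estimate only by a lower-order term and leaves the $O(\sqrt{L})$ conclusion intact; this is consistent with the empirical near-orthogonality of successive residual contributions reported in~\cite{men2024shortgpt}.
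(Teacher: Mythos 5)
Your proposal is correct and follows essentially the same route as the paper's proof: both unroll the pre-LN residual recurrence, invoke Lemma~2 under an assumed independence of successive increments with constant per-layer variance $k$ (your $\sigma^2$), and then apply Lemma~1 to convert $Var(x_L)\approx kL$ into $\Vert x_L \Vert = \sqrt{kLd} = O(\sqrt{L})$. The only difference is that you state explicitly, and attempt to justify via the normalization of the sublayer inputs, the independence and constant-variance assumptions that the paper simply posits ("Assuming that the input $x$ and output $y=f(x)$ of each layer are independent\ldots{} Suppose that $f_L(LN(x_i))$ has a mean value of $0$ and a variance of $k$"), so your write-up is a slightly more careful rendering of the identical argument.
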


\begin{proof}
Assuming that the input $x$ and output $y = f(x)$ of each layer are independent, then we know from Lemma 2:
$$Var(x_{L+1})=Var(x_L)+Var(f_L(LN(x_L)))$$
The variance of each layer will increase. From Lemma 1, it is known that the magnitude of each layer will increase. Suppose that $f_L(LN(x_i))$ has a mean value of 0 and a variance of $k$ for each $i$, we have:
$$\Vert x_L \Vert = \sqrt{kLd}$$

The result that the magnitude of the output from each layer increases is experimentally verified on most LLMs such as Baichuan-7B, Baichuan-33B and LLaMa-7B.
\end{proof}

\begin{claim}
\label{supple:claim1}
$LN(x_L)$ and $LN(x_{L+1})$ will become similar as the number of layers $L$ increases.
\end{claim}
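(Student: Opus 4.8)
The plan is to work directly with the pre-Layer-Normalization residual recursion $x_{L+1} = x_L + f_L(LN(x_L))$ underlying Lemma~3, and to show that the two normalized vectors become asymptotically collinear. The first step is to observe that $LN$ collapses its argument onto a fixed sphere: since the output of $LN$ has zero mean and unit standard deviation by construction, Lemma~1 applied to $LN(x)$ gives $\Vert LN(x) \Vert = \sqrt{d}$, so (up to the affine scale) $LN(x) = \sqrt{d}\, x / \Vert x \Vert$ depends only on the \emph{direction} of $x$. Consequently, proving that $LN(x_L)$ and $LN(x_{L+1})$ are close reduces to proving that the directions of $x_L$ and $x_{L+1}$ coincide in the limit of large $L$.

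The second step is to quantify the per-layer update $\delta_L = f_L(LN(x_L))$. Under the same hypothesis used to prove Lemma~3 — that $f_L(LN(\cdot))$ has mean zero and per-coordinate variance $k$ — Lemma~1 yields $\Vert \delta_L \Vert = \sqrt{kd}$, a constant independent of $L$, precisely because $f_L$ always acts on a normalized (hence norm-$\sqrt{d}$) input. In contrast, Lemma~3 gives $\Vert x_L \Vert = \sqrt{kLd}$. Hence the relative magnitude of each increment decays: $\Vert \delta_L \Vert / \Vert x_L \Vert = 1/\sqrt{L}$.

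The third step converts this relative bound into an angle bound. Decomposing $\delta_L = \delta_L^{\parallel} + \delta_L^{\perp}$ into components parallel and orthogonal to $x_L$, the angle $\theta_L$ between $x_L$ and $x_{L+1} = x_L + \delta_L$ satisfies $\sin\theta_L \le \Vert \delta_L^{\perp} \Vert / \Vert x_L \Vert \le \Vert \delta_L \Vert / \Vert x_L \Vert = 1/\sqrt{L}$. Since both $LN(x_L)$ and $LN(x_{L+1})$ lie on the sphere of radius $\sqrt{d}$, the chord between them obeys $\Vert LN(x_{L+1}) - LN(x_L) \Vert = 2\sqrt{d}\,\sin(\theta_L/2) \le \sqrt{d}\,\theta_L = O(\sqrt{d}/\sqrt{L})$, which tends to $0$ as $L \to \infty$. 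This is exactly the asserted convergence, and it also explains the observed plateau: successive deep layers feed nearly identical normalized inputs into their sublayers.

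I expect the main obstacle to be justifying the \emph{constant-increment} property, i.e.\ that $\Vert \delta_L \Vert$ stays bounded (here equal to $\sqrt{kd}$) while $\Vert x_L \Vert$ grows like $\sqrt{L}$; this is the crux and it rests on the per-layer variance hypothesis inherited from Lemma~3 together with the fact that pre-LN always feeds a normalized input to $f_L$. A secondary subtlety is the learnable affine part of $LN$ (the $\gamma \odot \hat{x} + \beta$ term), which I would absorb into $f_L$ or treat as fixed, arguing it does not affect the asymptotic direction computation. Finally, the independence and near-orthogonality assumptions behind Lemmas~2 and~3 are being reused and should be stated explicitly, since the subdominance of $\langle x_L, \delta_L\rangle$ relative to $\Vert x_L \Vert^2$ is what keeps the angle argument clean.
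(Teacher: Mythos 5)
Your proposal is correct and follows essentially the same route as the paper's own argument: both reduce $LN$ to the pure direction $x/\Vert x \Vert$, play the Lemma-3 norm growth $\Vert x_L \Vert = \sqrt{kLd}$ against the constant increment norm $\Vert f_L(LN(x_L)) \Vert = \sqrt{kd}$, and control the cross term --- your parallel/orthogonal decomposition with the sine bound is just the geometric form of the paper's Cauchy--Schwarz step, which yields $LN(x_L)^\mathsf{T} LN(x_{L+1}) \ge \sqrt{L/(L+1)} - \sqrt{1/(L+1)} \to 1$. The only cosmetic difference is that the paper states the conclusion as cosine similarity tending to one, while you state it as the chord distance $O(\sqrt{d/L})$ on the sphere tending to zero (and you are slightly more careful than the paper about the $\sqrt{d}$ normalization of the $LN$ output).
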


Although the absolute value of $f_L(LN(x_L))$ of each layer $L$ may not be small, the relative size compared to $x_L$ will be small. Specially we have:

$$LN(x_L)^T LN(x_{L+1}) = \frac{x_L^T \cdot x_{L+1}}{\Vert x_L \Vert \Vert x_{L+1} \Vert} =\frac{x_L^T(x_L + f_L(LN(x_L)))}{\Vert x_L \Vert \Vert x_{L+1} \Vert} =\frac{\Vert x_L \Vert}{\Vert x_{L+1} \Vert}+\frac{x_L^T f_L(LN(x_L))}{\Vert x_L \Vert \Vert x_{L+1} \Vert}$$

$$=\sqrt{\frac{L}{L+1}}+\frac{x_L^T f_L(LN(x_L))}{\Vert x_L \Vert \Vert x_{L+1} \Vert} \geq \sqrt{\frac{L}{L+1}} - \frac{\Vert x_L \Vert \Vert f_L(LN(x_L)) \Vert}{\Vert x_L \Vert \Vert x_{L+1} \Vert} = \sqrt{\frac{L}{L+1}} - \sqrt{\frac{kd}{k(L+1)d}}$$

$$=\sqrt{\frac{L}{L+1}}-\sqrt{\frac{1}{L+1}}$$

This shows that as the number of layers $L$ increases, the angle between $LN(x_L)$ and $LN(x_{L+1})$ will tend to 0, and the cosine distance will tend to 0, which further shows that these two are very close.

\begin{claim}
\label{supple:claim2}
When $L$ is large, the gradients of two adjacent layers are also very close as well. $$\frac{\partial J}{\partial x_L} \approx \frac{\partial J}{\partial x_{L+1}}$$
\end{claim}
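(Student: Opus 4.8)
The plan is to differentiate the recurrence $x_{L+1} = x_L + f_L(LN(x_L))$ through backpropagation and show that the correction term to the gradient is asymptotically negligible. First I would apply the chain rule to the residual update to obtain the relation between the two adjacent gradients. Writing $J$ for the loss and treating $x_{L+1}$ as a function of $x_L$, the chain rule gives
\$
\frac{\partial J}{\partial x_L} = \left(\frac{\partial x_{L+1}}{\partial x_L}\right)^{\Tb} \frac{\partial J}{\partial x_{L+1}} = \left(I + \frac{\partial f_L(LN(x_L))}{\partial x_L}\right)^{\Tb} \frac{\partial J}{\partial x_{L+1}},
\$
so that the difference between the two gradients is exactly the term coming from the Jacobian of $f_L \circ LN$. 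The identity matrix, produced by the residual connection, is what carries the gradient through unchanged, and the whole argument reduces to controlling the size of the Jacobian block relative to this identity.

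Next I would estimate the magnitude of $\partial (f_L(LN(x_L)))/\partial x_L$, decomposing it via the chain rule into the Jacobian of $f_L$ composed with the Jacobian of the pre-Layer-Normalization map $LN$. The key observation is that $LN$ normalizes its input to a fixed scale, so its Jacobian contains a factor of $1/\Vert x_L\Vert$; invoking Lemma 3, which gives $\Vert x_L \Vert \sim O(\sqrt{L})$, this contributes a decaying factor of order $1/\sqrt{L}$. Provided $f_L$ and its derivative are uniformly bounded in operator norm across layers (a standing regularity assumption I would make explicit, consistent with the bounded-variance assumption $k$ used in Lemma 3), the entire Jacobian block $\partial (f_L(LN(x_L)))/\partial x_L$ has operator norm of order $O(1/\sqrt{L})$. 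Substituting this back into the chain-rule identity yields
\$
\frac{\partial J}{\partial x_L} = \frac{\partial J}{\partial x_{L+1}} + O\!\left(\frac{1}{\sqrt{L}}\right)\frac{\partial J}{\partial x_{L+1}},
\$
which shows the relative discrepancy between adjacent gradients vanishes as $L\to\infty$, establishing the claim. This parallels the geometric argument of Claim~\ref{supple:claim1}, where the same $1/\sqrt{L+1}$ correction controlled the cosine distance between $LN(x_L)$ and $LN(x_{L+1})$; here the same mechanism controls the gradients.

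The main obstacle I anticipate is making the operator-norm bound on the Jacobian of $LN$ rigorous, because Layer Normalization is only scale-invariant along the radial direction and its Jacobian also involves a mean-subtraction and projection onto the sphere; one must verify that these additional pieces do not introduce factors that grow with dimension $d$ or cancel the $1/\Vert x_L\Vert$ decay. A careful treatment would compute the Jacobian of $LN$ explicitly as $(1/\Vert x_L\Vert)$ times a bounded projection-type matrix and confirm the scaling, mirroring the magnitude estimate $\Vert f_L(LN(x_L))\Vert = \sqrt{k d}$ already used in Claim~\ref{supple:claim1}. Since the paper's surrounding claims are presented at a heuristic, order-of-magnitude level rather than with fully rigorous constants, I would present this gradient estimate in the same spirit—tracking the dominant $1/\sqrt{L}$ scaling and the stabilizing role of the residual connection—rather than chasing sharp dimensional constants.
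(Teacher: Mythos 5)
Your proposal is correct and follows essentially the same route as the paper: differentiate the residual recurrence $x_{L+1}=x_L+f_L(LN(x_L))$ by the chain rule, split the correction Jacobian into $\frac{\partial f}{\partial LN(x_L)}\cdot\frac{\partial LN(x_L)}{\partial x_L}$, bound the first factor as $O(1)$ (the paper invokes weight decay on $W$) and the second as $O(1/\Vert x_L\Vert)=O(1/\sqrt{L})$ via Lemma 3, yielding $\frac{\partial J}{\partial x_L}=\frac{\partial J}{\partial x_{L+1}}\bigl(1+O(1/\sqrt{L})\bigr)$. Your added caution about the mean-subtraction and projection structure of the $LN$ Jacobian is a refinement the paper omits, but it does not change the argument.
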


We have:

$$x_{L+1}=x_L + f(LN(x_L))$$

then,

$$
\frac{\partial J}{\partial x_L} = \frac{\partial J}{\partial x_{L+1}} + \frac{\partial J}{\partial x_{L+1}} \frac{\partial f(LN(x_L))}{\partial x_{L}} = \frac{\partial J}{\partial x_{L+1}} + \frac{\partial J}{\partial x_{L+1}} \frac{\partial f(LN(x_L))}{\partial LN(x_L)} \frac{\partial LN(x_L)}{\partial x_L}
$$

We observe that $\frac{\partial f(LN(x_L))}{\partial LN(x_L)}$ will not be big. For example, if $f$ is a linear, then $\frac{\partial f(LN(x_L))}{\partial LN(x_L)}=W^T$, and $\Vert W \Vert$ cannot be big because of weight decay. Another term $\frac{\partial LN(x_L)}{\partial x_L} = O(\frac{1}{\Vert x_L \Vert})$, with a big $L$, this term is going to be small (according to Claim 3). So that

$$\frac{\partial J}{\partial x_L} = \frac{\partial J}{\partial x_{L+1}}(1+O(\frac{1}{\sqrt{L}}))\approx \frac{\partial J}{\partial x_{L+1}}$$

Integrating Claim~\ref{supple:claim1} and \ref{supple:claim2}, we understand that when $L$ is large, the inputs to layers $L$ and $L+1$ are very similar, as are the gradients propagated back through them, and the network architecture remains identical. Thus, these two layers will be extremely alike. The contextual information accessible to layer $L$ is essentially accessible to layer $L+1$ as well. The only difference is a slight reduction in the number of attention cycles, which may lead to a decrease in the intensity of attention.

In summary, the \emph{Residual Connections} lead to an increasing norm of the residual branches, and the addition of \emph{pre-Layer Normalization} results in very similar inputs for the Attention and MLP layers between adjacent layers in deeper networks. On the other hand, the increasing norm of the \emph{Residual Connections} branches and the presence of \emph{pre-Layer Normalization} also cause that the errors propagated back through adjacent layers in deeper networks are very similar. These factors contribute to a high degree of similarity between layers $L+1$ and $L$ and the plateau in the layer-wise cognition capability measurement. 

These theoretical results and the appearance of the cognitive capability plateau curve in the Pretraining stage (Figure~\ref{fig:the establishment of cognitive capability}) may represent some redundancy of the model. To see this, we did several additional experiments.

\paragraph{Experiment 1.} We delete the 23-th layer of Baichuan-7B (31 layers in total) and connect the rest of the model directly, the performance on MMLU and CMMLU almost doesn't decline, which aligns with the result on the layer-wise cognition capability curve.

\begin{table}[h]
\caption{Deleting one redundant intermediate layer.}
\label{supple:table:Deleting one redundant intermediate layer}
\begin{center}
\begin{small}
\begin{sc} 
\begin{tabular}{lcc}
\toprule
Dataset Name & Baichuan-7B & Baichuan-7B (delete 23-th layer)\\
\midrule
MMLU    &  0.5416 & 0.5398 \\
CMMLU   &  0.5707 & 0.5659 \\
\bottomrule
\end{tabular}
\end{sc}
\end{small}
\end{center}
\vskip -0.1in
\end{table}

\paragraph{Experiment 2.} We found that basically speaking, the larger/deeper the LLM, the more redundancy is likely to be. This suggests that under the current training data volume, the model size has a lot of room for optimization. To see this, we conduct the cognition capability measurement on Phi-2, a 2.7B "small" LLM that is reported to have the on-par capability with LLaMa2-7B and LLaMa2-13B. The plateau curve is also observed but the plateau part is much shorter. The results are shown as follow:

\begin{table}[h]
\caption{The relationship between the plateau propotion and the model size.}
\label{supple:table:The relationship between the plateau propotion and the model depth}
\begin{center}
\begin{small}
\begin{sc}
\begin{tabular}{lll}
\toprule
Model Name & Model Depth & "Plateau" propotion\\
\midrule
Phi-2 & 32 & 0.225 \\
Baichuan2-7B & 32 & 0.53125 \\
LLaMa2-7B & 32 & 0.53125 \\
Baichuan2-13B & 40 & 0.575 \\
LLaMa2-13B & 40 & 0.575 \\
Baichuan2-33B & 60 & 0.55 \\
Baichuan2-53B & 64 & 0.5873 \\
\bottomrule
\end{tabular}
\end{sc}
\end{small}
\end{center}
\vskip -0.1in
\end{table}

Some recent works regarding model merging (mainly describes the hard structural merging of two different models without reducing model capabilities) also suggest the same phenomenon~\cite{yadav2024ties,goddard2024arcee}. One conjecture is that these layers in plateau may be redundant in terms of cognitive capability, but play a role in establishing expressive capability. We observe that in the tests in MMLU and CMMLU (experiment 1), if one of the layers is deleted, the logits values of the four options A, B, C, and D will become smaller. It is inferred that the $L+1$ layer only strengthens the prediction results of the original $L$ layer, and help the model to better output token-level results. 

\section{Supplementary Experimental Results}

\subsection{Hypothesis testing}
\label{appen:subsec:hypothesis testing}

We provide the supplementary results of hypothesis testing in ARC-Challenge, ARC-Easy, CommonSenseQA, OpenbookQA and RACE, in addition to the results shown in the main part. The results are shown in Table~\ref{appen: table: hypothesis testing probability}.

\begin{table*}[ht]
\caption{The statistical correlation analysis of the cognitive capability and the expressive capability. H-Test probability stands for hypothesis testing probability with the hypothesis that they are irrelevant. We refer the expressive accuracy and the cognitive accuracy to the accuracy that are obtained by the quantification of the expressive capability and the cognitive capability as mentioned in Section~\ref{sec: main results}. }
\label{appen: table: hypothesis testing probability}
\vskip 0.15in
\begin{center}
\begin{small}
\begin{sc}
\begin{tabular}{p{2.5cm}lcM{1.8cm}M{1.8cm}M{2.1cm}M{2cm}}
\toprule
Data set & Model & Consistency & Expressive Accuracy & Cognitive Accuracy & Binomial Distribution & H-Test probability\\
\midrule
ARC Challenge    & SFT Epoch2& 75.00\% & 65.88\% & 78.59\% & $\cB(300, 0.5908)$ & 0.23\%\\
                 & SFT Epoch3& 77.00\% & 71.01\% & 78.59\% & $\cB(300, 0.6201)$ & $<$0.01\%\\
                 & SFT Epoch4& 77.92\% & 76.51\% & 78.92\% & $\cB(300, 0.6533)$ & $<$0.01\%\\
                 & PPO       & 80.20\% & 78.26\% & 82.60\% & $\cB(300, 0.6842)$ & 0.01\%\\
\midrule
ARC Easy         & SFT Epoch2& 79.43\% & 81.13\% & 92.40\% & $\cB(570, 0.7639)$& 1.04\%\\
                 & SFT Epoch3& 89.28\% & 84.01\% & 92.04\% & $\cB(570, 0.7859)$& $<$0.01\%\\
                 & SFT Epoch4& 89.10\% & 88.22\% & 92.98\% & $\cB(570, 0.8285)$& 0.01\%\\
                 & PPO       & 89.34\% & 87.92\% & 92.63\% & $\cB(570, 0.8233)$& $<$0.01\%\\
\midrule
CommonSenseQA    & SFT Epoch2& 60.28\% & 54.60\% & 76.98\% & $\cB(1221, 0.5248)$& $<$0.01\%\\
                 & SFT Epoch3& 64.70\% & 55.62\% & 76.26\% & $\cB(1221, 0.5295)$& $<$0.01\%\\
                 & SFT Epoch4& 66.83\% & 65.98\% & 77.39\% & $\cB(1221, 0.5870)$& $<$0.01\%\\
                 & PPO       & 68.14\% & 66.75\% & 77.23\% & $\cB(1221, 0.5917)$& $<$0.01\%\\
\midrule
OpenBookQA       & SFT Epoch2& 62.14\% & 53.51\% & 75.20\% & $\cB(500, 0.5176)$& $<$0.01\%\\
                 & SFT Epoch3& 70.28\% & 60.00\% & 74.80\% & $\cB(500, 0.5496)$& $<$0.01\%\\
                 & SFT Epoch4& 71.74\% & 63.53\% & 74.00\% & $\cB(500, 0.5649)$& $<$0.01\%\\
                 & PPO       & 68.33\% & 61.52\% & 73.20\% & $\cB(500, 0.5534)$& $<$0.01\%\\
\midrule
RACE             & SFT Epoch2& 72.26\% & 68.72\% & 79.42\% & $\cB(3451, 0.6101)$& $<$0.01\%\\
                 & SFT Epoch3& 73.24\% & 73.17\% & 78.94\% & $\cB(3451, 0.6341)$& $<$0.01\%\\
                 & SFT Epoch4& 74.24\% & 78.94\% & 74.00\% & $\cB(3451, 0.6563)$& $<$0.01\%\\
                 & PPO       & 73.92\% & 77.01\% & 80.10\% & $\cB(3451, 0.6410)$& $<$0.01\%\\
\bottomrule
\end{tabular}
\end{sc}
\end{small}
\end{center}
\vskip -0.1in
\end{table*}

\subsection{Methods for bridging the gap}
\label{appen:subsec:methods for bridging the gap}

We provide the supplementary results of methods for bridging the gap between cognitive and expressive capabilities in ARC-Challenge, ARC-Easy, CommonSenseQA, OpenbookQA and RACE, in addition to the results shown in the main part. The results are shown in Table~\ref{appen:fig:gap between expressive and cognitive capability}.

\begin{figure*}[h]
	\centering
	\begin{subfigure}
		\centering
		\includegraphics[width=0.33\columnwidth]{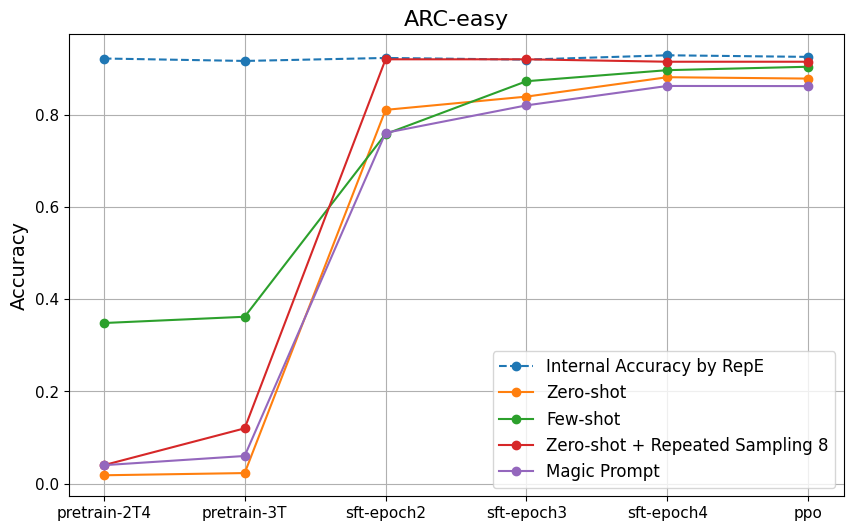}
	\end{subfigure}
	\centering
	\begin{subfigure}
		\centering
		\includegraphics[width=0.32\columnwidth]{figure/baseline_arc_challenge.png}
	\end{subfigure}
	\centering
 	\begin{subfigure}
		\centering
		\includegraphics[width=0.32\columnwidth]{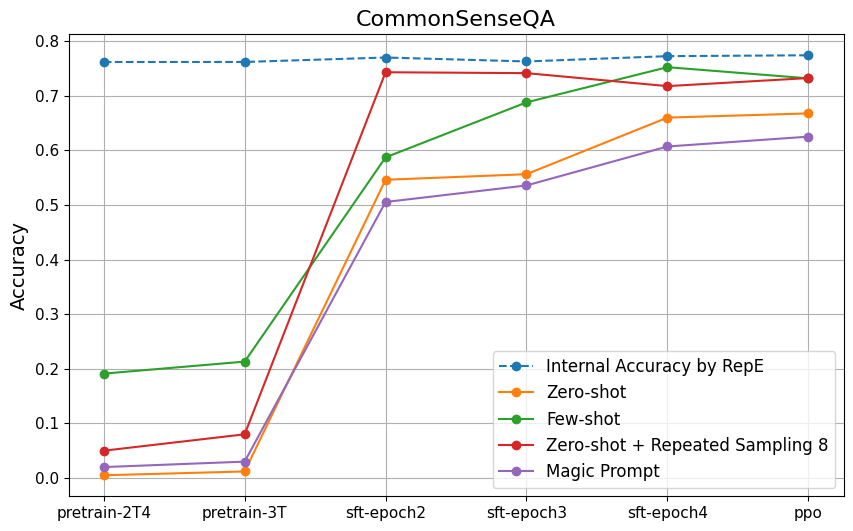}
	\end{subfigure}
        \centering
        \begin{subfigure}
		\centering
		\includegraphics[width=0.32\columnwidth]{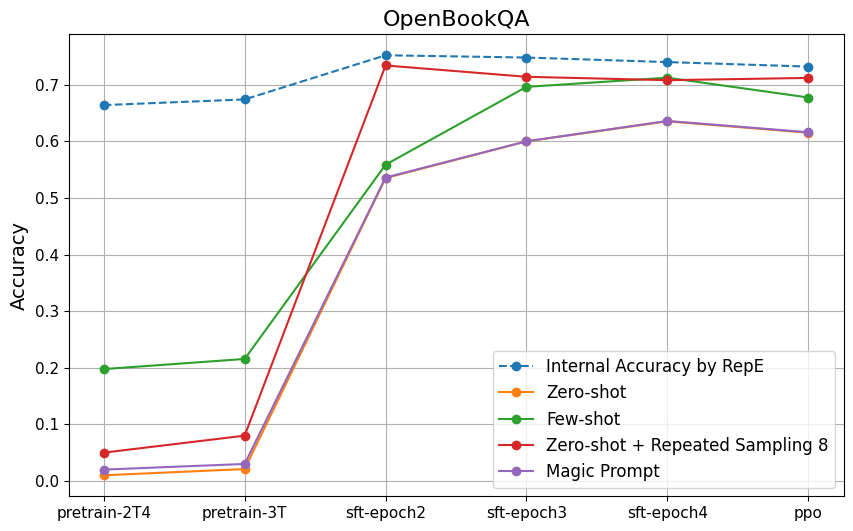}
	\end{subfigure}
         \centering
        \begin{subfigure}
		\centering
		\includegraphics[width=0.32\columnwidth]{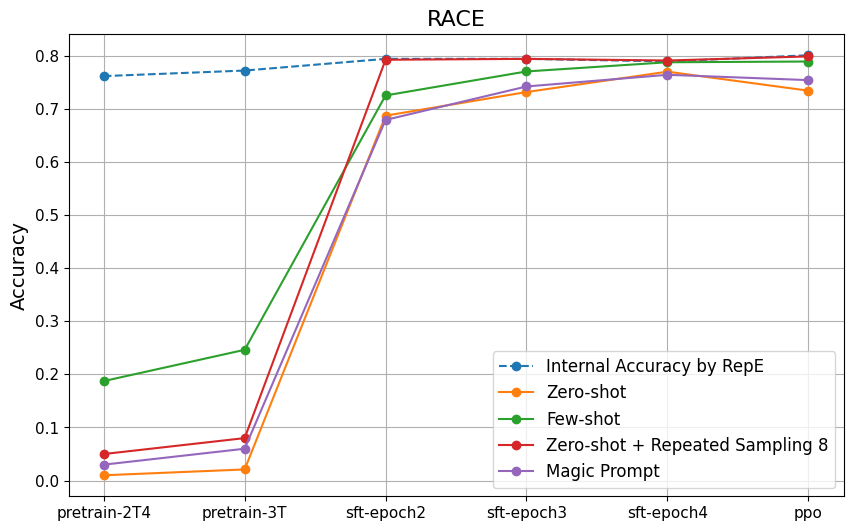}
	\end{subfigure}
	\caption{Methods for bridging the gap between the expressive capability and the cognitive capability.}
	\label{appen:fig:gap between expressive and cognitive capability}
	\vskip -0.2in
\end{figure*}

\begin{figure*}[t]
	\centering
	\begin{subfigure}
		\centering
		\includegraphics[width=0.49\columnwidth]{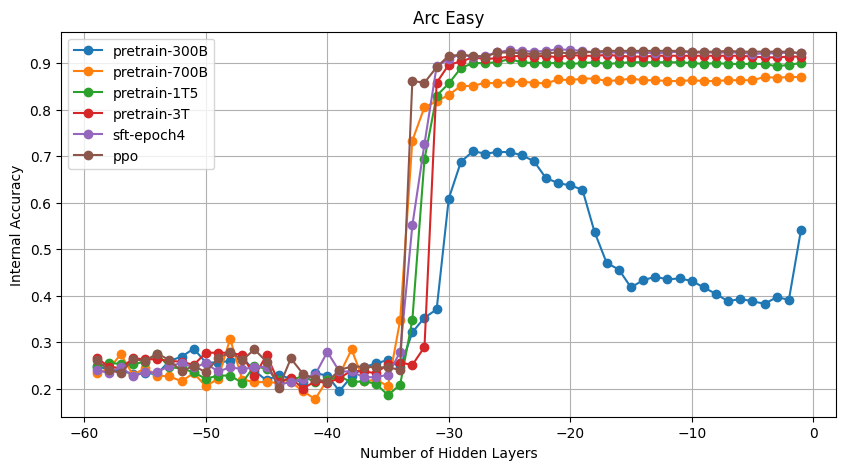}
	\end{subfigure}
	\centering
	\begin{subfigure}
		\centering
		\includegraphics[width=0.49\columnwidth]{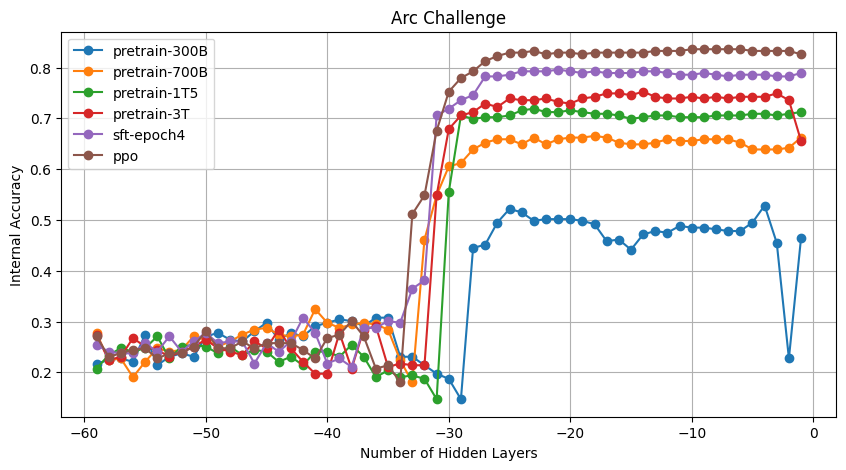}
	\end{subfigure}
	\centering
 	\begin{subfigure}
		\centering
		\includegraphics[width=0.49\columnwidth]{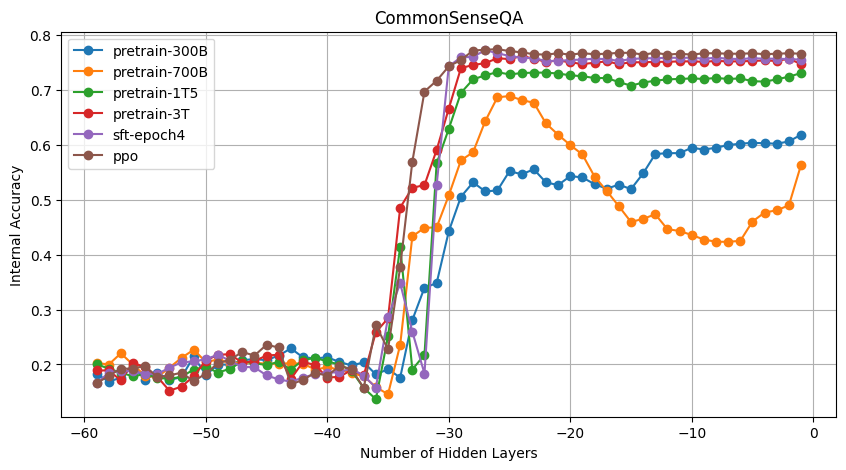}
	\end{subfigure}
        \centering
        \begin{subfigure}
		\centering
		\includegraphics[width=0.49\columnwidth]{figure/pca_layer_obqa.png}
	\end{subfigure}
	\caption{The performance of each layer of representation engineering in Baichuan-33B in different training stage, which reflect the internal establishment process of the cognitive capability.}
	\label{appen:fig:the establishment of cognitive capability}
	\vskip -0.2in
\end{figure*}

\subsection{The establishment of cognitive capability in Pretraining}
\label{appen:subsec:Pretraining process of Baichuan-7B}

We provide the supplementary results of the cognitive capability (measured by Algorithm~\ref{alg: representation engineering}) establishment process in Pretraining phase in both Baichuan-7B and Baichuan-33B, in addition to the results shown in the main part. 

The progression of cognitive capability in Baichuan-7B, assessed by Algorithm~\ref{alg: representation engineering}, is depicted in Figure~\ref{appen: fig: pretrain pca in 7B}.

The performance of the linear representations for each layer in Baichuan-33, which reflect the internal establishment process of the cognitive capability. The results in Baichuan-7B is shown in Figure~\ref{appen:fig:the establishment of cognitive capability for Baichuan-7B} and the results in Baichuan-33B is shown in Figure~\ref{appen:fig:the establishment of cognitive capability}.

\begin{figure}[h]
\begin{center}
\centerline{\includegraphics[width=0.48\columnwidth]{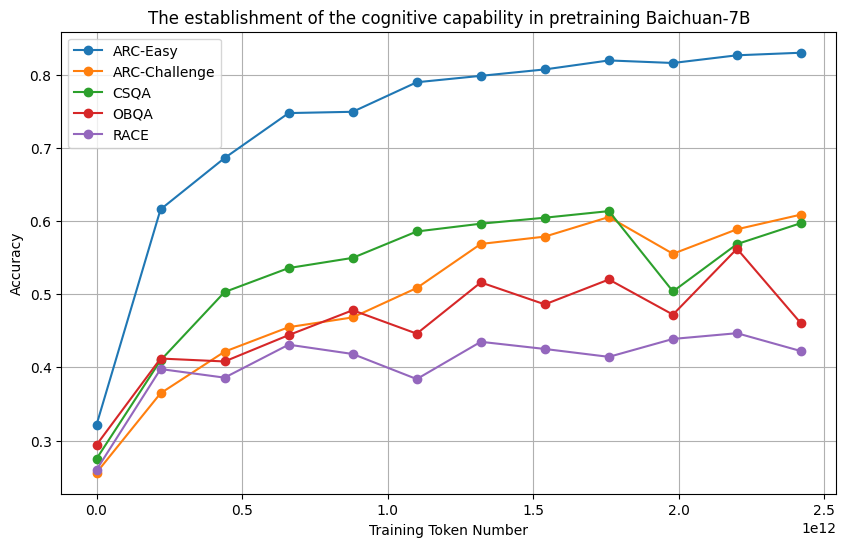}}
\caption{The figure shows the increasing process of the cognitive capability in the Pretraining stage in Baichuan-7B.}
\label{appen: fig: pretrain pca in 7B}
\end{center}
\end{figure}

\begin{figure*}[ht]
	\centering
	\begin{subfigure}
		\centering
		\includegraphics[width=0.48\columnwidth]{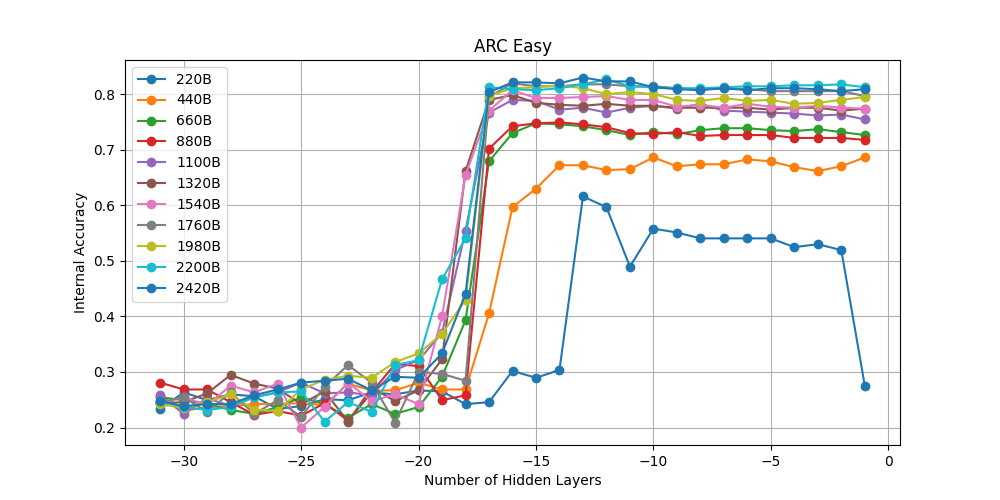}
	\end{subfigure}
	\centering
	\begin{subfigure}
		\centering
		\includegraphics[width=0.48\columnwidth]{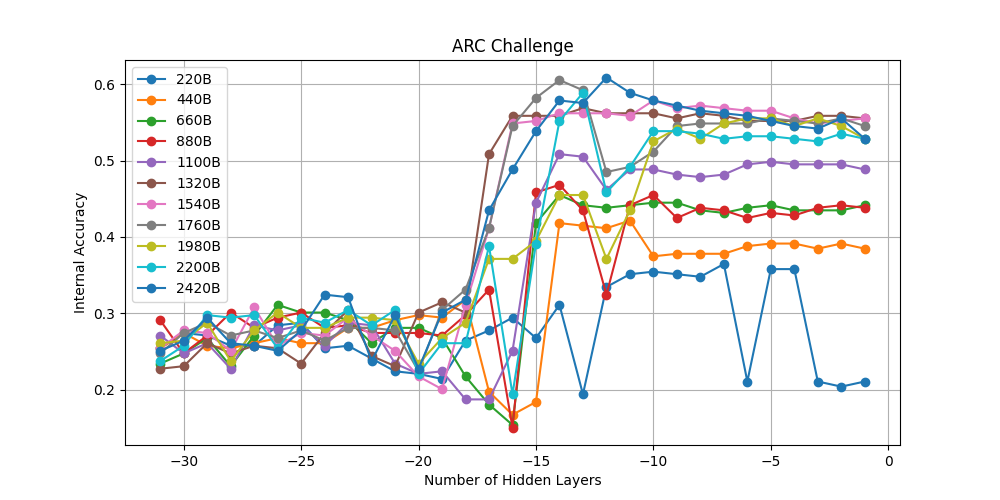}
	\end{subfigure}
	\centering
 	\begin{subfigure}
		\centering
		\includegraphics[width=0.48\columnwidth]{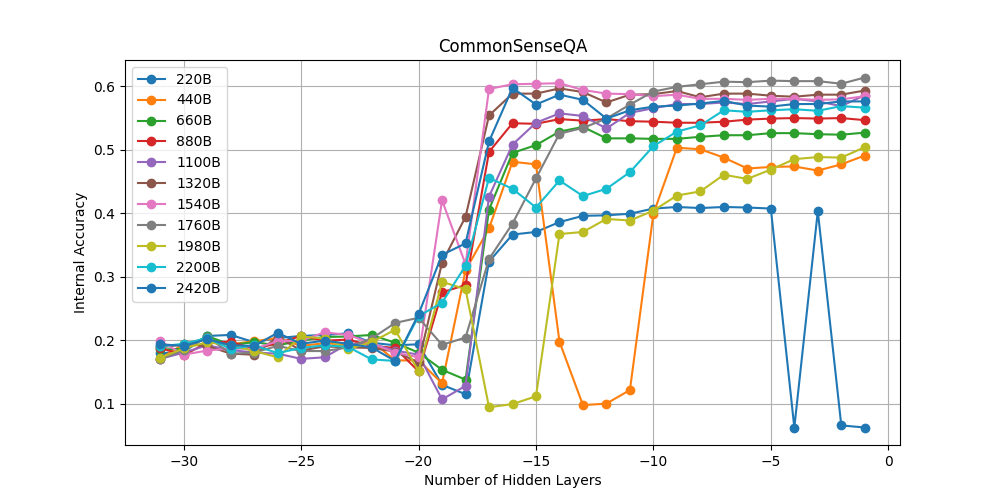}
	\end{subfigure}
        \centering
        \begin{subfigure}
		\centering
		\includegraphics[width=0.48\columnwidth]{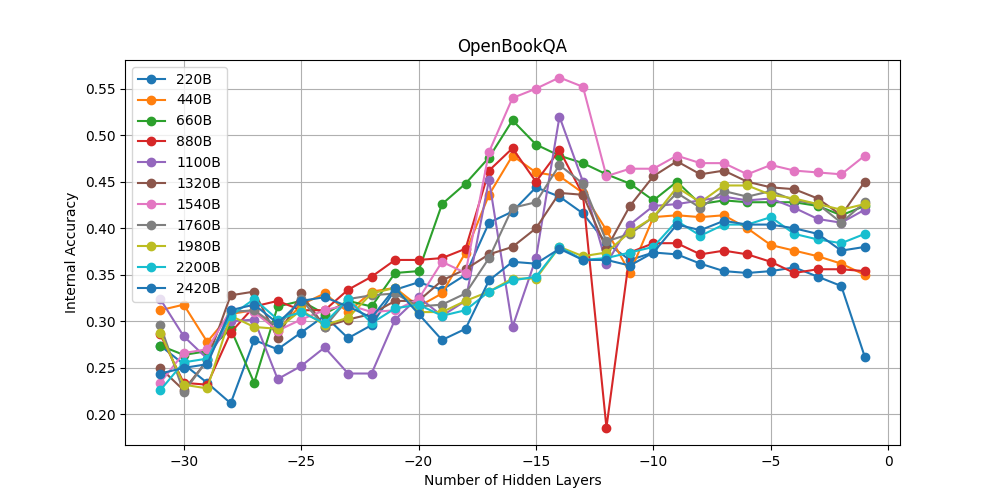}
	\end{subfigure}
	\caption{The performance of each layer of representation engineering in Baichuan-7B in different training stage, which reflect the internal establishment process of the cognitive capability.}
	\label{appen:fig:the establishment of cognitive capability for Baichuan-7B}
	\vskip -0.2in
\end{figure*}

\subsection{The gap between the quantified cognitive capability and the lm-evaluation-harness performance}
\label{appen:subsec:The gap between the quantified cognitive capability and the lm-evaluation-harness performance}

We applied the general lm-evaluation-harness framework~\cite{eval-harness} to both models, which employs greedy search to evaluate each option's probability for answer selection. The outcomes are shown in Figure~\ref{fig:lm evaluation harness in Baichuan-33B} for Baichuan-33B and in Figure~\ref{fig:lm evaluation harness in Baichuan-7B} for Baichuan-7B. 

\begin{figure*}[ht]
	\centering
	\begin{subfigure}
		\centering
		\includegraphics[width=0.48\columnwidth]{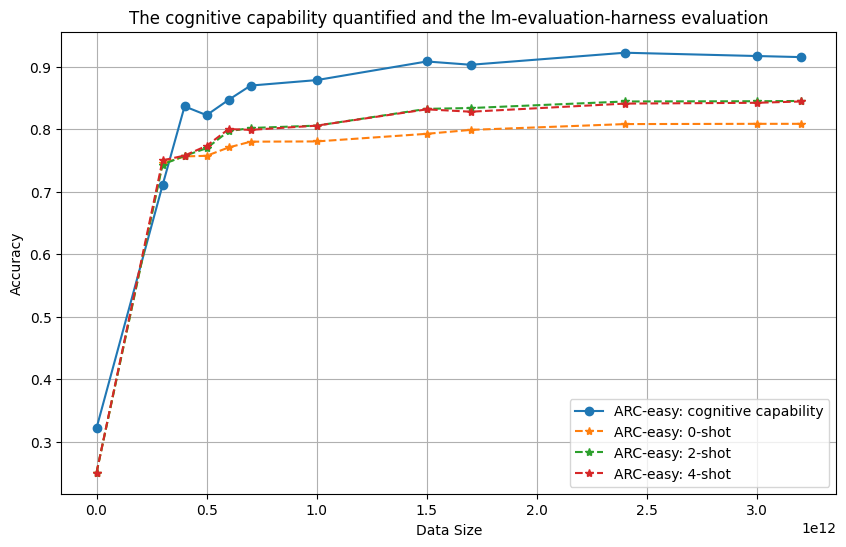}
	\end{subfigure}
	\centering
	\begin{subfigure}
		\centering
		\includegraphics[width=0.48\columnwidth]{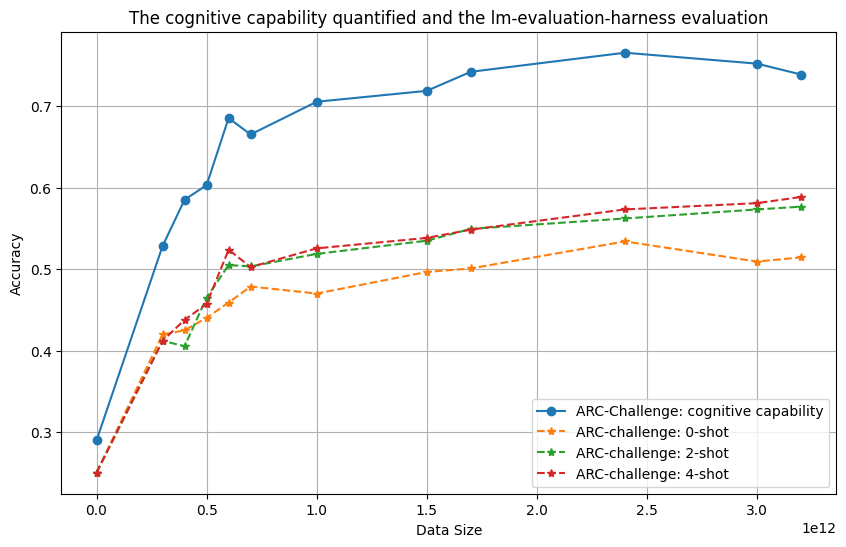}
	\end{subfigure}
	\centering
 	\begin{subfigure}
		\centering
		\includegraphics[width=0.48\columnwidth]{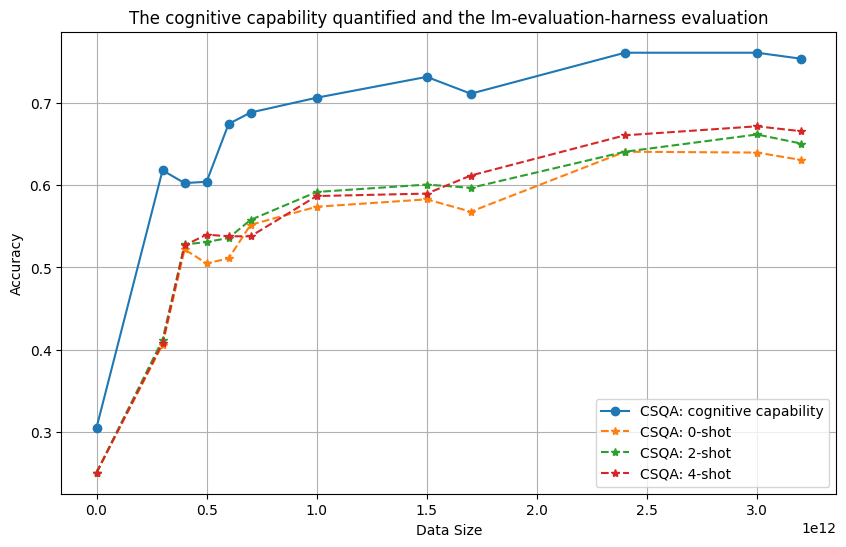}
	\end{subfigure}
        \centering
        \begin{subfigure}
		\centering
		\includegraphics[width=0.48\columnwidth]{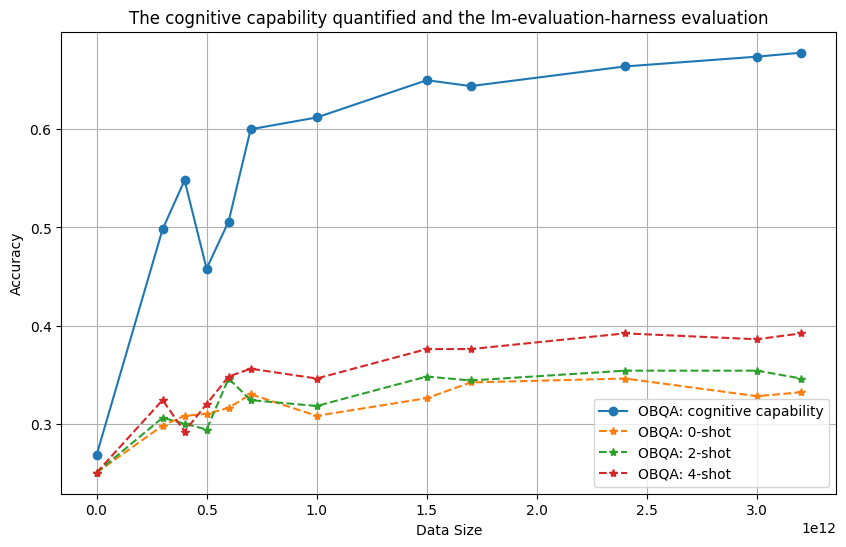}
	\end{subfigure}
	\caption{The gap between the quantified cognitive capability and the lm-evaluation-harness performance in Baichuan-33B.}
	\label{fig:lm evaluation harness in Baichuan-33B}
\end{figure*}

\begin{figure*}[ht]
	\centering
	\begin{subfigure}
		\centering
		\includegraphics[width=0.48\columnwidth]{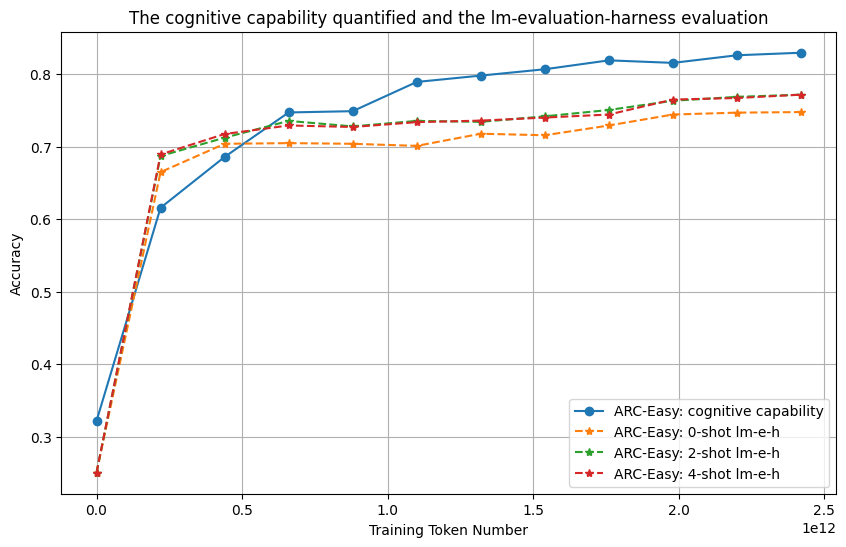}
	\end{subfigure}
	\centering
	\begin{subfigure}
		\centering
		\includegraphics[width=0.48\columnwidth]{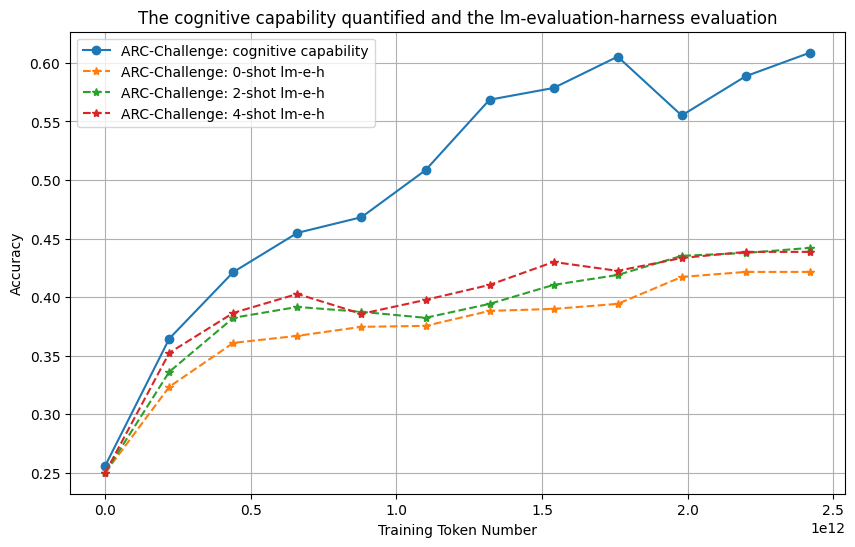}
	\end{subfigure}
	\centering
 	\begin{subfigure}
		\centering
		\includegraphics[width=0.48\columnwidth]{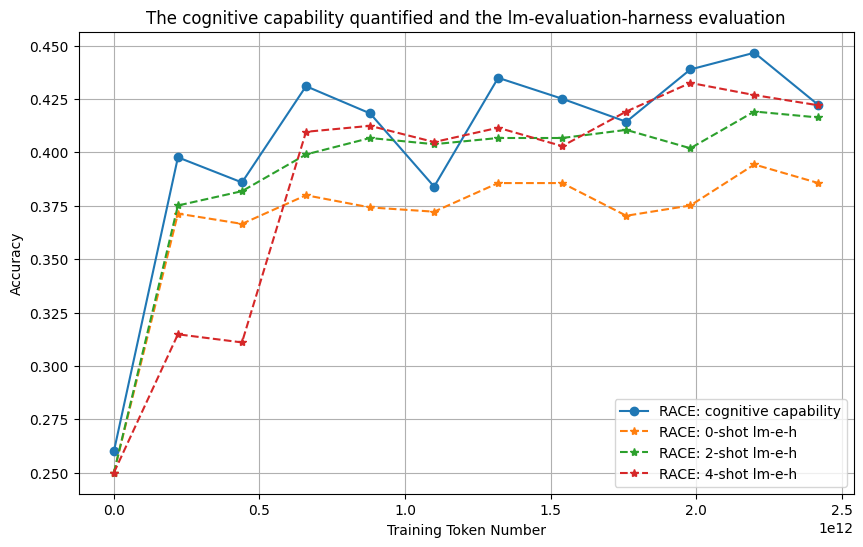}
	\end{subfigure}
        \centering
        \begin{subfigure}
		\centering
		\includegraphics[width=0.48\columnwidth]{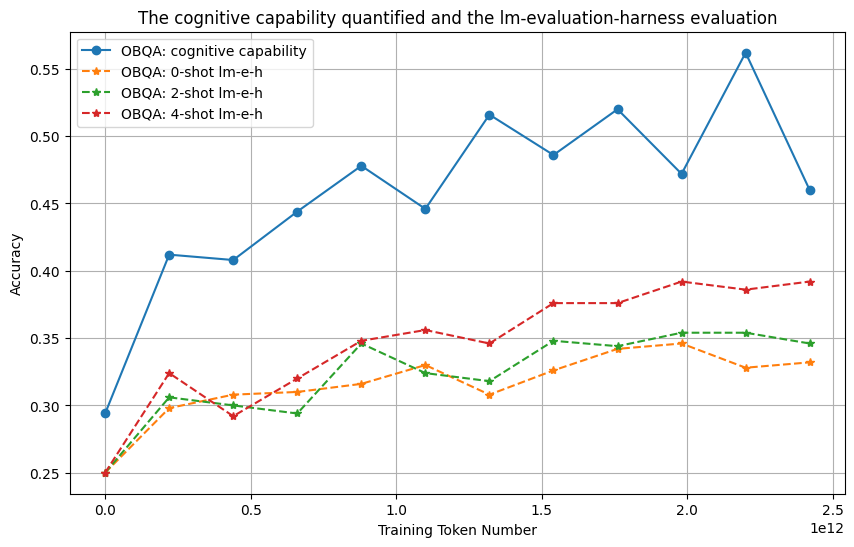}
	\end{subfigure}
	\caption{The gap between the quantified cognitive capability and the lm-evaluation-harness performance in Baichuan-7B.}
	\label{fig:lm evaluation harness in Baichuan-7B}
\end{figure*}

\section{Case Study}
\label{appen:sec:case study}

\begin{figure}[t]
\begin{center}
\centerline{\includegraphics[width=0.48\columnwidth]{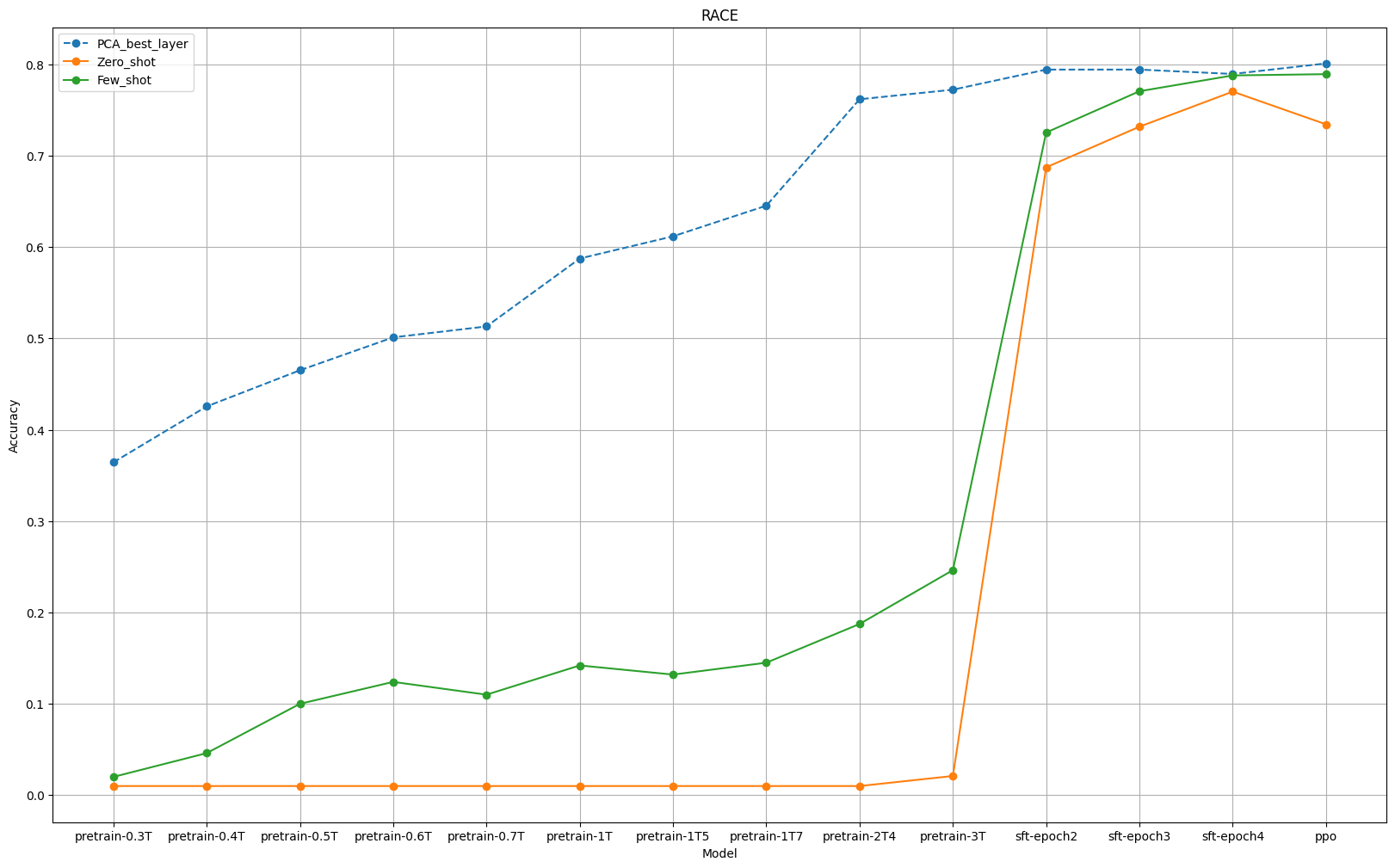}}
\caption{An example in RACE: The complete establishment process of the cognitive capability and the expressive capability in Pretraining, SFT and RLHF. }
\label{fig:gap in race}
\end{center}
\end{figure}

\begin{table}[ht]
\centering
\begin{tabular}{|l|p{12cm}|}
\hline
\textbf{Question} & Which technology was developed most recently?

A. cellular telephone

B. television

C. refrigerator

D. airplane

Choose only one answer directly. \\ 
\hline
\textbf{Reference Answer} & A. cellular telephone \\
\hline
pretrain-2T4 & e choice, click on the mouse and enter.You can make up to five attempts - it is for free.If you made the first choice (not correct), try other possibilities by clicking here: other options)  How to solve this test correctly in order to save time and have successful progress within next tests?To do this you should have access (knowledge of) to all of the given answers. This is not a problem, all of you who want know an answer to a specific choice, just go to the end of my ePortafolio for a full explanation of any choice available. Do not miss the chance to learn, for the sake of saving some time while answering some questionnaires ... You cannot progress or do the work unless you put knowledge before time, at least in this case. Take the opportunity to be really well prepared during this whole survey process! \\
\hline
pretrain-3T & 39) By which means did the Incas carry messages and information? A. telephone B. postal service C. human couriers D. radio wave Choose only one answer directly from the text. 40) To live in caves of rocks is called A nomadic life B agrarian life C sedentary life D fishing and seafaring life 41) Which type of food the hunters eat? A. meat B. meat and blood C. blood, skin and meat D. milk  Choose only one answer from the text. 42) The hunter/gatherer made tools to chop or break food. What tool did they use when cutting food for cooking or serving it on plates, spoons and knives? A. scissors B. spuds C. knives and forks D. knives  Choose only one answer. 43) Today some people live by moving from place to place eating, hunting and collecting roots, berries and other wild plants. What people are they? A. agriculturists B. forest farmers C. hunter/gatherers D. forest dwellers  Choose only one word from the text for the blank. SECTION I B  50 MINUTE TIME  Read the following passages and choose the correct answers according to the given instructions.\\
\hline
SFT-epoch3 & They all appeared within about 1900, but I think television was introduced first (roughly 1925) and then refrigerators in the 1930s, followed by cellular phones in the 1980s. Airplanes were around well before this (as early as the beginning of the 20th century). So choosing only among these options the airplane would be the correct option. \\
\hline
SFT-epoch4 & A. cellular telephone \\
\hline
PPO & A. cellular telephone \\
\hline
\end{tabular}
\caption{A test example: The response to the same question in different training stages for Baichuan-33B.}
\end{table}

\end{document}